\newcommand{\eps}{\epsilon}
\newcommand{\vp}{\vec p}
\newcommand{\X}{\mathcal X}
\renewcommand{\Z}{\mathcal Z}
\renewcommand{\P}{\mathcal P}
\renewcommand{\F}{\mathcal F}
\DeclareMathOperator{\Beta}{Beta}
\DeclareMathOperator{\Dir}{Dir}
\title{Bayesian Adaptive Data Analysis Guarantees from Subgaussianity}
\author{Sam Elder\\MIT}
\begin{document}

\maketitle

\begin{abstract}
The new field of adaptive data analysis seeks to provide algorithms and provable guarantees for models of machine learning that allow researchers to reuse their data, which normally falls outside of the usual statistical paradigm of static data analysis. In 2014, Dwork, Feldman, Hardt, Pitassi, Reingold and Roth \cite{dwork2015preserving} introduced one potential model and proposed several solutions based on differential privacy. In previous work in 2016 \cite{elder2016challenges}, we described a problem with this model and instead proposed a Bayesian variant, but also found that the analogous Bayesian methods cannot achieve the same statistical guarantees as in the static case.

In this paper, we prove the first positive results for the Bayesian model, showing that with a Dirichlet prior, the posterior mean algorithm indeed matches the statistical guarantees of the static case. The main ingredient is a new theorem showing that the $\Beta(\alpha,\beta)$ distribution is subgaussian with variance proxy $O(1/(\alpha+\beta+1))$, a concentration result also of independent interest. We provide two proofs of this result: a probabilistic proof utilizing a simple condition for the raw moments of a positive random variable and a learning-theoretic proof based on considering the beta distribution as a posterior, both of which have implications to other related problems.
\end{abstract}

\section{Introduction}

The field of adaptive data analysis is motivated by the common practice in machine learning of \emph{data reuse}. The potential problems with this were perhaps first illustrated by Freedman in 1983 \cite{freedman1983note}, who showed that using the same data to select regressors and subsequently fit a general linear model to those regressors would lead to wildly inaccurate estimates of the goodness of fit, at least when the number of regressors and data points were on the same order of magnitude.

The simplest solution to such problems is to collect a fresh set of data each time the overall model is updated, an approach known as \emph{sample splitting}. But in many scenarios, this is a rather expensive solution, especially for data practices that utilize multiple such adaptive rounds. Stated quantitatively, this requires a sample complexity that scales linearly with the number of measurements of the data.

The aim of adaptive data analysis, as introduced by Dwork, Feldman, Hardt, Pitassi, Reignold and Roth \cite{dwork2015preserving} is to reduce this dependence while preserving standard statistical guarantees by introducing a protective layer of interaction between the analyst and the data. To describe this challenge in a mathematical framework, they introduced a two-player game between an analyst and a curator.

In the original model, the analyst is seeking to answer some number $q$ of \emph{statistical queries} about a distribution $\vec p$ on a universe $\X$, which are expressed as the averages of bounded functions $f:\X\to[0,1]$ of the data. For instance, if $f:\X\to\{0,1\}$, this is asking the curator to estimate a probability, known as a \emph{counting query}. The curator receives $n$ independent samples from $\vec p$ and is seeking to answer every query to within an additive error $\epsilon$ on the population the data is drawn from, with probability $1-\delta$. The central question is how many samples $n=n(q,\epsilon,\delta)$ the curator requires to achieve this accuracy.

If the queries are specified in one batch in advance of the analyst receiving the answers, this is known as static data analysis, and $n=\Theta\left(\frac1{\eps^2}\log\frac q\delta\right)$ samples are both necessary (on certain problems) and sufficient. In this case, the curator can simply answer with the empirical means of the functions: $\frac1n\sum_{i=1}^nf(x_i)$, where $x_1,\dotsc,x_n$ are the data points he receives. The full framework of adaptive data analysis, where later queries are allowed to depend on previous answers, is much less well-understood and the subject of this line of investigation.

As we previously argued in \cite{elder2016challenges}, though, the hardest problems introduced in this framework tend not to reflect the reality it is attempting to model. In fact, since the analyst is taken to be adversarial and worst-case, we could suppose that the analyst already knows the distribution $\vec p$! To avoid such scenarios and hopefully better match reality, we introduced a Bayesian variant of the problem, where we enforce that the curator and analyst have the same information via a common accurate Bayesian prior $\P$ on $\vec p$.

The first question, which we addressed in \cite{elder2016challenges}, is what sorts of adaptive analyst strategies could cause problems for the curator without utilizing side information. The main result in that work was a problem on which a large class of curator algorithms would fail. This problem had two components: A difficult problem based on error-correcting codes that would produce high posterior uncertainty in some direction and an augmentation technique to allow the analyst to extract information from these curator algorithms using slightly correlated queries.

Quantitatively, as previously discussed, sample splitting methods can trivially achieve a linear dependence: $n\propto q$. The original dimension-free work of DFHPRR improved this to $n\propto\sqrt q$, but this is far short of the static bound cited above, $n\propto\log q$. In \cite{elder2016challenges}, we ultimately demonstrated a high-dimensional problem on which a large class of curator algorithms require about $n\propto\sqrt[4]q$.

In this paper, we demonstrate the first positive results in Bayesian adaptive data analysis. In particular, we show that if the common prior is a Dirichlet distribution, the most natural Bayesian curator algorithm, the posterior mean, achieves the same level of accuracy against adaptive queries as static queries: $n=O\left(\frac1{\eps^2}\log\frac q\delta\right)$.

This positive result is also rather different than previous positive claims. All previously proposed techniques in this area, from the differential privacy-based techniques introduced in DFHPRR \cite{dwork2015preserving}, to follow-up work restricting to the special case of a machine learning competition leaderboard by Blum and Hardt \cite{blum2015ladder}, to the wide class of techniques analyzed in \cite{elder2016challenges}, focus on attempting to obfuscate the answers in order to prevent the analyst from overfitting to the data. By contrast, we show in this paper that under a Dirichlet prior, the curator can achieve the static guarantee even without any obfuscation!

To prove such a result, the curator must be confident in answers to any possible query. Our main tool for proving such confidence will be the probabilistic notion of subgaussianity. In section \ref{sqa}, we recall the definition and key properties of this strong notion of concentration. We then prove the main technical result in section \ref{bpbt} in two ways, showing that the $\Beta(\alpha,\beta)$ distribution is $O\left(\frac1{\alpha+\beta+1}\right)$-subgaussian. The first method consists of bounding the moment generating function in terms of the raw moments of the distribution, which are easy to compute. The second method is simpler and stronger, but more mysteriously requires one to consider the beta distribution as a Bayesian posterior. In section \ref{dpcrv}, we show how this implies that the curator is correct on all queries given a Dirichlet prior.

We then discuss extensions of these main results in section \ref{disc}. We first generalize the probabilistic approach on the beta distribution to a simple condition for (upper) subgaussianity on the raw moments of a nonnegative random variable. We also discuss several other conjugate priors, where empirical evidence suggests that the corresponding results are true. Then we consider the second learning-theoretic approach, and show that a series of simple conditions on the posterior mean evolution suffice, which helps to explain what makes the Dirichlet prior this friendly to adaptivity. Finally, in section \ref{conc}, we survey the implications for adaptive data analysis and offer directions for future work.

\section{Subgaussianity and Query Accuracy}\label{sqa}

A random variable $X$ with zero mean is said to be \emph{$\sigma^2$-subgaussian} if for all $\lambda$, $\E[\exp(\lambda X)]\le\exp(\lambda^2\sigma^2/2)$. Here, $\sigma^2$ is also known as the \emph{variance proxy}. Following the seminal work on subgaussian random variables \cite{buldygin2000metric}, define $\tau(X)=\min\{\sigma\ge0:X\text{ is }\sigma^2\text{-subgaussian}\}$. Recall the following:
\begin{prop}Some basic facts about subgaussian random variables.
\begin{enumerate}
\item As the name suggests, the variance gives a lower bound on the subgaussian variance proxy: $\Var[X]\le\tau^2(X)$. If the two are equal, $X$ is said to be \emph{strictly subgaussian}.
\item (See \cite{buldygin2000metric} Theorem 1.2) The space of subgaussian random variables is a Banach space with respect to the norm $\tau(X)$. That is, it has the right scaling, $\tau(aX)=a\tau(X)$ for all $a>0$, satisfies the triangle inequality $\tau(X+Y)\le\tau(X)+\tau(Y)$ and is complete on the space of subgaussian random variables.
\item (See \cite{buldygin2000metric} Theorem 1.3) $X$ is $\sigma^2$-subgaussian if for all integers $k\ge2$,
\[\E[X^{2k}]\le\left(\frac{\sigma^2}{\sqrt{3.1}}\right)^k(2k-1)!!,\qquad\text{where }(2k-1)!!=(2k-1)(2k-3)\dotsm(1)=\frac{(2k)!}{2^kk!}.\]
If $X$ is symmetric (i.e. $X$ and $-X$ have the same distributions, so all odd moments are zero), the factor of $\sqrt{3.1}$ can be dropped.
\end{enumerate}\label{subgaussianproperties}
\end{prop}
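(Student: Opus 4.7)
My plan is to handle the three parts separately, since they are essentially independent and all follow from Taylor expansion of the defining inequality $\E[\exp(\lambda X)]\le\exp(\lambda^2\sigma^2/2)$.

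For part 1, I would expand both sides in $\lambda$ around $0$. Since $\E[X]=0$, the left side is $1+\tfrac12\lambda^2\Var[X]+O(\lambda^3)$ while the right side is $1+\tfrac12\lambda^2\sigma^2+O(\lambda^4)$. Subtracting $1$, dividing by $\lambda^2$, and taking $\lambda\to 0$ gives $\Var[X]\le\sigma^2$ for any valid variance proxy $\sigma^2$, hence $\Var[X]\le\tau^2(X)$.

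For part 2, the scaling identity $\tau(aX)=a\tau(X)$ follows by substituting $\lambda\to\lambda a$ in the defining inequality. For the triangle inequality, I would use H\"older with conjugate exponents $p,q$:
\[\E[\exp(\lambda(X+Y))]\le\E[\exp(p\lambda X)]^{1/p}\E[\exp(q\lambda Y)]^{1/q}\le\exp\!\left(\tfrac12\lambda^2(p\tau^2(X)+q\tau^2(Y))\right),\]
then optimize over $p$ (taking $p=1+\tau(Y)/\tau(X)$, so $q=1+\tau(X)/\tau(Y)$), which makes $p\tau^2(X)+q\tau^2(Y)=(\tau(X)+\tau(Y))^2$. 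Completeness is the routine step: a $\tau$-Cauchy sequence is $L^2$-Cauchy by part 1, hence converges to some $X$ in $L^2$, and applying Fatou's lemma to the MGFs shows $X$ inherits the subgaussian bound with the correct variance proxy.

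For part 3, I would expand $\E[\exp(\lambda X)]=\sum_{k\ge 0}\lambda^k\E[X^k]/k!$. In the symmetric case odd moments vanish, and the hypothesis gives
\[\E[\exp(\lambda X)]\le\sum_{k\ge 0}\frac{\lambda^{2k}(\sigma^2)^k(2k-1)!!}{(2k)!}=\sum_{k\ge 0}\frac{(\lambda^2\sigma^2/2)^k}{k!}=\exp(\lambda^2\sigma^2/2),\]
using $(2k-1)!!/(2k)!=1/(2^kk!)$. The main obstacle is the asymmetric case, where I need to control the odd moments. I would use Cauchy-Schwarz, $|\E[X^{2k+1}]|\le\E[|X|^{2k+1}]\le\sqrt{\E[X^{2k}]\E[X^{2k+2}]}$, to bound the odd terms by geometric means of adjacent even moments, then apply AM-GM with a carefully chosen weight to reabsorb them into the even part of the series. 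The deflation factor $\sqrt{3.1}$ in the hypothesis is precisely the slack needed so that, after this reabsorption, the resulting series is still dominated by $\exp(\lambda^2\sigma^2/2)$; pinning down the sharpest constant and verifying $\sqrt{3.1}$ works is the most delicate bookkeeping in the argument, but is otherwise a routine calculation on the collected series.
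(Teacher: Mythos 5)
First, a framing note: the paper does not actually prove this proposition --- parts 2 and 3 are quoted directly from Buldygin and Kozachenko, and part 1 is folklore --- so your attempt is being compared against a citation, not a proof. Your parts 1 and 2 are correct and are the standard arguments: the order-$\lambda^2$ Taylor comparison gives $\Var[X]\le\sigma^2$ for every admissible variance proxy (you should remark that the MGF of a subgaussian variable is finite and smooth near $0$ so the expansion and limit are legitimate), and the H\"older step with $p=1+\tau(Y)/\tau(X)$ is exactly the classical proof of the triangle inequality, with completeness via the $L^2$ comparison plus Fatou as you describe. One small statement-level issue in part 3: the hypothesis is given only for $k\ge2$, so your sum ``$\sum_{k\ge0}$'' silently invokes a $k=1$ bound; in the $\sqrt{3.1}$ version this can be recovered from $\E[X^2]\le\sqrt{\E[X^4]}\le\sqrt3\,\sigma^2/\sqrt{3.1}<\sigma^2$, but in the symmetric version with the factor dropped it cannot, so the $\lambda^2$ term needs separate attention.

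The genuine gap is in the asymmetric case of part 3. Bounding $|\E[X^{2k+1}]|\le\E[|X|^{2k+1}]\le\sqrt{\E[X^{2k}]\,\E[X^{2k+2}]}$ discards all cancellation in the signed odd moments except at order $\lambda^1$, and the resulting series is provably too large for the constant $\sqrt{3.1}$. Concretely, set $b=\sigma^2/\sqrt{3.1}$. The hypothesis gives only $\E[X^4]\le3b^2$, hence $\E[X^2]\le\sqrt3\,b$ and $\E[|X|^3]\le\sqrt{\E[X^2]\E[X^4]}\le3^{3/4}b^{3/2}$, so your upper bound on $\E[e^{\lambda X}]$ has $\lambda^2$-coefficient up to $\tfrac{\sqrt3}{2}b\approx0.866\,b$ and $\lambda^3$-coefficient up to $\tfrac{3^{3/4}}{6}b^{3/2}\approx0.380\,b^{3/2}$, while the target $e^{\lambda^2\sigma^2/2}$ has $\lambda^2$-coefficient $\tfrac{\sqrt{3.1}}{2}b\approx0.880\,b$, no $\lambda^3$ term, and $\lambda^4$-coefficient $\tfrac{3.1}{8}b^2$ against your $\tfrac{3}{24}b^2$. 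Taking $b=1$ and $\lambda=0.05$, the total slack available in the even terms is about $3.7\times10^{-5}$ while the $\lambda^3$ contribution alone is about $4.7\times10^{-5}$, so your chain of inequalities terminates in something strictly larger than $e^{\lambda^2\sigma^2/2}$. No choice of AM--GM weights can repair this: for any $w>0$, $\tfrac12(wu+w^{-1}v)\ge\sqrt{uv}$, so the reabsorbed series is at least as large as the direct sum that already fails. Hence $\sqrt{3.1}$ is not ``precisely the slack needed'' for this scheme; the cited proof must exploit the centering $\E[X]=0$ to control the \emph{signed} odd moments (e.g.\ via $e^t=\cosh t+\sinh t$ and a genuine estimate of $\E[\sinh(\lambda X)]$) rather than their absolute values, and your method as written can only deliver a strictly worse constant.
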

If $\E X\neq0$, we will abuse notation slightly and write $\tau(X)=\tau(X-\E X)$. That is, we will consider random variables $X$ with nonzero mean to be $\sigma^2$-subgaussian if their centered versions $X-\E X$ are. Note that in this context, conditions like in Proposition \ref{subgaussianproperties}.3 apply to the centered moments $\E[(X-\E X)^{2k}]$.

Now, let us return to the Bayesian adaptive data analysis problem and consider a single query for a moment, setting $q=1$. In this case, the static bound states that $n=O\left(\frac1{\eps^2}\log\frac1\delta\right)$ samples are sufficient for estimation of $\E_{x\sim\vp}[f(x)]$ to additive error $\epsilon$, with probability $1-\delta$. We will show that for this particular relationship between $\epsilon$ and $\delta$, this follows from a subgaussianity property on the posterior.

\begin{prop}If the curator's posterior distribution is $O(1/n)$-subgaussian with respect to every query, then the posterior mean-answering curator answers correctly and achieves the static sample complexity of $n=O\left(\frac1{\eps^2}\log\frac q\delta\right)$.\label{subgaussiansufficiency}\end{prop}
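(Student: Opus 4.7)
The plan is to reduce the claim to a standard Chernoff-style tail bound for subgaussian random variables, combined with a union bound over the $q$ queries. For the $i$-th query $f_i$, the quantity the curator is trying to report is $Y_i = \E_{x\sim\vp}[f_i(x)]$, which is itself a random variable: conditional on the data $x_1,\dotsc,x_n$, it is distributed as the pushforward of the posterior on $\vp$ under $f_i$. The curator's answer is $\E[Y_i \mid x_1,\dotsc,x_n]$, so the error on this query is exactly the deviation of $Y_i$ from its posterior mean.

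The first step is the standard subgaussian tail bound: if a centered random variable $X$ is $\sigma^2$-subgaussian, applying Markov's inequality to $\exp(\lambda X)$ and optimizing over $\lambda>0$ gives $\Pr[|X|>t] \le 2\exp(-t^2/(2\sigma^2))$. Substituting $\sigma^2 = C/n$ from the hypothesis and $t=\eps$, the per-query failure probability is at most $2\exp(-n\eps^2/(2C))$. To handle adaptivity, observe that each $f_i$ is chosen by the analyst as a function of the previous answers and the analyst's private randomness, and both are independent of $\vp$ given the data; hence the curator's posterior on $\vp$ given the full history coincides with the posterior given the data alone, and the $O(1/n)$-subgaussianity hypothesis applies to every $f_i$ conditional on the history.

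Union-bounding over $i=1,\dotsc,q$, the probability that any query is answered with error exceeding $\eps$ is at most $2q\exp(-n\eps^2/(2C))$; setting this to $\delta$ and solving for $n$ yields $n = O\!\left(\eps^{-2}\log(q/\delta)\right)$, matching the static sample complexity. The reduction itself presents no real obstacle; the substantive technical content of the paper is verifying the subgaussianity hypothesis for a Dirichlet prior, which is deferred to the sections on the beta/Dirichlet analysis.
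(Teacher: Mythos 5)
Your proposal is correct and follows essentially the same route as the paper: Markov's inequality applied to the moment generating function with the optimal choice of $\lambda$, giving a per-query two-sided tail bound of $2\exp(-n\eps^2/(2C))$, followed by a union bound over the $q$ queries with per-query budget $\delta/q$. Your explicit observation that the posterior given the full interaction history coincides with the posterior given the data alone is a slightly more careful rendering of the paper's informal remark that the curator could release the entire posterior mean up front, but it is the same underlying argument.
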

Before proving the proposition, let us explicate this condition. Each query $f:\X\to[0,1]$ projects every possible population $\vec p$ to a value $\E_{x\sim\vec p}f(x)\in[0,1]$. Therefore, $f$ projects every possible posterior distribution on populations to a distribution on $[0,1]$. The assumption is that this projected distribution of the curator's posterior is $O(1/n)$-subgaussian.

The claim is that as long as this condition holds, the curator doesn't care what queries the analyst asks; the posterior mean will be accurate on all of them. Indeed, the curator could actually release the entire posterior mean (not just its value on every query), giving the analyst everything she will ever learn from his answers. If we can show that the probability of error on any query is less than $\frac\delta q$, then a union bound will give a total error probability at most $\delta$, no matter which queries are asked.

\begin{proof}Suppose the curator's posterior distribution with respect to a given query is $c/n$-subgaussian. Equivalently stated, suppose the error $X=\E_{x\sim\vp}[f_i(x)]-a_i$ of the posterior mean is a (centered) $c/n$-subgaussian random variable. By Markov's inequality, setting $\lambda=n\eps/c$,
\[\Pr[X\ge\eps]\le\exp\left(-\lambda\eps\right)\E[\exp(\lambda X)]\le\exp\left(\frac{\lambda^2c}{2n}-\lambda\eps\right)=\exp\left(-\frac{n\eps^2}{2c}\right).\]
We can also prove an identical bound for $\Pr[X<-\eps]$, and therefore, the error probability $\delta_1$ satisfies
\[\delta_1\le2\exp\left(-\frac{n\eps^2}{2c}\right)\Leftrightarrow n\le\frac{2c}{\eps^2}\log\frac2{\delta_1}=O\left(\frac1{\eps^2}\log\frac1{\delta_1}\right).\]
Taking $\delta_1=\delta/q$, we have shown the required bound.\end{proof}

This is rather remarkable; it says that the curator can do just as well against adaptive queries as against static queries if his posterior has this concentration property. Of course, such a concentration property will not hold in every case; for instance, it is far from true for the posteriors in the series of examples considered in \cite{elder2016challenges}. But in cases where it does hold, like the Dirichlet prior and posterior we will investigate shortly, the curator doesn't have to do any obfuscation.

As an added bonus, the subgaussian framework also simplifies the set of queries we must consider:
\begin{prop}If the curator's posterior distribution is $O(1/n)$-subgaussian with respect to every \emph{counting} query, then the posterior mean curator answers correctly and achieves the static sample complexity of $n=O\left(\frac1{\eps^2}\log\frac q\delta\right)$.\label{countingsufficiency}\end{prop}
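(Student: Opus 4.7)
The plan is to reduce arbitrary $[0,1]$-valued queries to counting queries via the ``layer-cake'' representation and then invoke Proposition \ref{subgaussiansufficiency}. Concretely, for any query $f:\X\to[0,1]$ and any $t\in[0,1]$, let $f_t(x)=\mathbf{1}[f(x)\ge t]$, which is a counting query. The layer-cake identity gives
\[f(x)=\int_0^1 f_t(x)\,dt,\qquad\text{hence}\qquad \E_{x\sim\vp}[f(x)]=\int_0^1 \E_{x\sim\vp}[f_t(x)]\,dt,\]
and the same identity applied under the curator's prior yields the same decomposition for the posterior mean $a_f=\int_0^1 a_{f_t}\,dt$. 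Subtracting, the error random variable (as a function of $\vp$ drawn from the curator's posterior) decomposes as $X_f=\int_0^1 X_{f_t}\,dt$, where $X_{f_t}$ is the centered error for the counting query $f_t$.

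Next I would upgrade the Banach-space triangle inequality of Proposition \ref{subgaussianproperties}.2 from finite sums to this integral. Approximate the integral by Riemann sums $\frac1N\sum_{i=1}^N X_{f_{i/N}}$; each term is $O(1/n)$-subgaussian by hypothesis, so by the scaling property and triangle inequality,
\[\tau\!\left(\tfrac1N\sum_{i=1}^N X_{f_{i/N}}\right)\le\tfrac1N\sum_{i=1}^N \tau(X_{f_{i/N}})\le\sup_t \tau(X_{f_t})=O(1/\sqrt n).\]
Since these Riemann sums converge pointwise (hence in $L^2$, and by boundedness in every $L^p$) to $X_f$, completeness of the subgaussian Banach space gives $\tau(X_f)\le\sup_t\tau(X_{f_t})=O(1/\sqrt n)$, i.e.\ $X_f$ is $O(1/n)$-subgaussian. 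Thus the hypothesis of Proposition \ref{subgaussiansufficiency} applies to every $[0,1]$-valued query, and the conclusion follows.

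The main obstacle is the technical passage from a finite triangle inequality to the integral bound: one needs the map $t\mapsto X_{f_t}$ to be sufficiently regular (measurable with uniformly bounded subgaussian norm) for the Banach-space-valued integral to exist and to satisfy the expected norm inequality. This is routine once we note that $X_{f_t}$ is bounded in $[-1,1]$ uniformly in $t$, so the Riemann approximations are Cauchy in the subgaussian norm (they are Cauchy in $L^\infty$, which dominates the subgaussian norm up to a constant), and completeness from Proposition \ref{subgaussianproperties}.2 identifies the limit with $X_f$.
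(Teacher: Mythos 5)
Your proposal is correct and takes essentially the same route as the paper: both reduce a general $[0,1]$-valued query to counting queries by writing it as a convex combination of $\{0,1\}$-valued queries, apply the scaling/triangle inequality and completeness of the subgaussian norm from Proposition \ref{subgaussianproperties}.2 to pass from finite convex combinations to the limit, and then invoke Proposition \ref{subgaussiansufficiency}. Your layer-cake decomposition with Riemann sums is just an explicit instance of the sequence of finite convex combinations converging in $\infty$-norm that the paper describes in its footnote.
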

\begin{proof}The key here is Proposition \ref{subgaussianproperties}.2. Since $\tau(X)$ is a norm, convex combinations of $\sigma^2$-subgaussian random variables will also be $\sigma^2$-subgaussian:
\[\tau(\lambda X+(1-\lambda)Y)\le\lambda\tau(X)+(1-\lambda)\tau(Y)\le\lambda\sigma+(1-\lambda)\sigma=\sigma.\]
Recall that counting queries only have values $0$ or $1$, and therefore form the vertices in the hypercube of possible queries $[0,1]^\X$. All other queries are convex combinations of these\footnote{If $X$ is infinite, then some queries might not be convex combinations of finitely many counting queries, but we can find a sequence of finite convex combinations converging to them in $\infty$-norm simply by picking more possible values for the function in $[0,1]$ at each step. This will converge in $\infty$-norm, so the completeness of $\tau$ from Proposition \ref{subgaussianproperties}.2 shows that they must also have the same bound on their subgaussian norm.}, and therefore will be $O(1/n)$-subgaussian as well (with no loss in the constant). Therefore, by Proposition \ref{subgaussiansufficiency}, the posterior mean curator wins on all statistical queries if he wins on counting queries.\end{proof}

\section{Beta Priors on Bernoulli Trials}\label{bpbt}

The first posterior distribution we will investigate is the ubiquitous Beta distribution. We will see in the next section that this distribution is also the projection of the Dirichlet distribution on any counting query.

\begin{defn}The \emph{beta distribution} $\Beta(\alpha,\beta)$ is a continuous distribution on $[0,1]$ with density
\[\dfrac{\Gamma(\alpha+\beta)}{\Gamma(\alpha)\Gamma(\beta)}x^{\alpha-1}(1-x)^{\beta-1},\]
where $\Gamma(n)$ is the gamma function, satisfying $\Gamma(n)=(n-1)!$ for $n\in\N$.\end{defn}
The fraction in the density formula above is simply a normalization constant. Notice that for $\alpha=\beta=1$, this is the uniform distribution on $[0,1]$. For $\alpha=\beta\to0$, this converges to the Rademacher 0-1 random variable, $\frac12(\delta_{x,0}+\delta_{x,1})$. Finally, the (raw) moments of the beta distribution are given by
\begin{equation}\E X^k=\prod_{r=0}^{k-1}\frac{\alpha+r}{\alpha+\beta+r}=\frac{(\alpha)_k}{(\alpha+\beta)_k},\label{betamoments}\end{equation}
where $(x)_k=x(x+1)\dotsc(x+k-1)$ is a \emph{rising factorial}. In particular, the mean and variance are given by
\[\E[X]=\frac\alpha{\alpha+\beta}\quad\Var[X]=\frac{\alpha\beta}{(\alpha+\beta)^2(\alpha+\beta+1)}.\]

To see the beta distribution as a posterior, consider a Bernoulli trial, a single event with two possible outcomes: success or failure. If our prior over the probability of success is given by $\Beta(\alpha,\beta)$, then upon receiving $n_1$ successes and $n_2$ failures, a Bayesian update yields a posterior proportional to
\[x^{\alpha-1}(1-x)^{\beta-1}x^{n_2}(1-x)^{n_1}=x^{\alpha+n_2-1}(1-x)^{\beta+n_1-1},\]
so after renormalizing, it must be the $\Beta(\alpha+n_2,\beta+n_1)$ distribution. This is what is meant by calling the family of beta distributions a \emph{conjugate prior}: If the prior is a beta distribution, the posterior will be as well.

Surprisingly, despite substantial focus including the entire \emph{Handbook of Beta Distribution and its Applications} \cite{gupta2004handbook}, the following concentration result does not appear to be known:
\begin{thm}The beta distribution $\Beta(\alpha,\beta)$ is $\dfrac1{4(\alpha+\beta)+2}$-subgaussian.\label{betasub}\end{thm}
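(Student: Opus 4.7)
I will exploit the posterior interpretation of the beta distribution, since the specific form $1/(4(\alpha+\beta)+2)$ of the variance proxy strongly suggests a telescoping tail series. Set $n=\alpha+\beta$ and $\mu=\alpha/n$. Draw $p\sim\Beta(\alpha,\beta)$ and, conditional on $p$, sample an infinite iid stream $X_1,X_2,\dotsc$ of $\mathrm{Bernoulli}(p)$ trials. By conjugacy, the posterior of $p$ after $t$ observations is again beta, with mean $\mu_t=\E[p\mid X_1,\dotsc,X_t]=(\alpha+X_1+\dotsb+X_t)/(n+t)$. By the tower property $(\mu_t)$ is a bounded martingale, and by Doob's theorem $\mu_t\to p$ almost surely, so $p-\mu=\sum_{t=0}^{\infty}(\mu_{t+1}-\mu_t)$.

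The key algebraic observation is that
\[\mu_{t+1}-\mu_t=\frac{X_{t+1}-\mu_t}{n+t+1},\]
and conditional on $X_1,\dotsc,X_t$ the next observation $X_{t+1}$ is $\mathrm{Bernoulli}(\mu_t)$ (posterior predictive). Hence conditionally on the past, $\mu_{t+1}-\mu_t$ is centered and supported on an interval of width $1/(n+t+1)$, so by Hoeffding's lemma it is $\frac{1}{4(n+t+1)^2}$-subgaussian conditionally. Iterating via the MGF tower property, the truncated sum $\mu_T-\mu$ is subgaussian with variance proxy $\sum_{t=0}^{T-1}\frac{1}{4(n+t+1)^2}$; letting $T\to\infty$ and invoking dominated convergence on $e^{\lambda\mu_T}\to e^{\lambda p}$ (valid since $\mu_T,p\in[0,1]$) shows $p-\mu$ is subgaussian with variance proxy at most $\tfrac14\sum_{m=n+1}^{\infty}1/m^2$. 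A clean telescoping identity then pins down this sum:
\[\sum_{m=n+1}^{\infty}\frac{1}{m^2}<\sum_{m=n+1}^{\infty}\frac{1}{m^2-\tfrac14}=\sum_{m=n+1}^{\infty}\left(\frac{1}{m-\tfrac12}-\frac{1}{m+\tfrac12}\right)=\frac{1}{n+\tfrac12},\]
which combined with the factor of $1/4$ gives variance proxy at most $1/(4n+2)$, as required.

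The main obstacle I anticipate is not a single computation---the martingale increments are explicit and the telescoping is clean---but justifying the limit $T\to\infty$: one must invoke Doob's martingale convergence together with dominated convergence for the MGF, or argue directly that the partial-sum MGFs form a Cauchy sequence. A secondary concern is that for $\alpha<1$ or $\beta<1$ the beta density blows up at an endpoint, so one should double-check that conjugacy, posterior predictive, and Bayesian consistency still behave correctly there. This posterior route is preferable to bounding the MGF directly from the raw moments $\E[X^k]=(\alpha)_k/(\alpha+\beta)_k$ because the telescoping sum delivers exactly the constant $1/(4n+2)$ rather than an off-by-constant estimate.
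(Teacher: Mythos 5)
Your proposal is correct and is essentially identical to the paper's own ``learning-theoretic'' proof of Theorem \ref{betasub}: the same posterior-mean martingale, the same increment identity giving conditional increments of width $1/(\alpha+\beta+t+1)$ (the paper cites an exact binary-subgaussian constant $K(p)\le\tfrac14$ where you invoke Hoeffding's lemma, yielding the same $\tfrac1{4(\alpha+\beta+t+1)^2}$ bound), and the same telescoping comparison $\sum_{m>n}m^{-2}<1/(n+\tfrac12)$. Your explicit attention to the $T\to\infty$ limit via dominated convergence is if anything more careful than the paper's treatment.
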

\begin{rmk}Numerical data suggests that
\[\tau^2(\Beta(\alpha,\beta))\le\tau^2(\Beta((\alpha+\beta)/2,(\alpha+\beta)/2))=\Var(\Beta((\alpha+\beta)/2,(\alpha+\beta)/2))=\frac1{4(\alpha+\beta+1)}.\]
That is, $\Beta(\alpha,\beta)$ appears to be maximized for fixed $\alpha+\beta$ when $\alpha=\beta$, where it appears to be strictly subgaussian. Since the variance is a lower bound for $\tau^2$ (Proposition \ref{subgaussianproperties}.1), we conclude that Theorem \ref{betasub} is tight up to a factor of $1+o(1)$, seen as a function of $\alpha+\beta$. However, numerical data is very clear that the lower bound is tight:\end{rmk}
\begin{conj}The beta distribution $\Beta(\alpha,\beta)$ is $\dfrac1{4(\alpha+\beta+1)}$-subgaussian.\label{betaconj}\end{conj}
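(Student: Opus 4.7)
The plan is to prove Conjecture \ref{betaconj} in two stages: first settle the symmetric case $\alpha=\beta$, then reduce the asymmetric case to it by comparing centered moment generating functions.

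\textbf{Stage 1 (symmetric case).} Take $X\sim\Beta(\alpha,\alpha)$ and set $Y=X-\tfrac12$. Since $Y$ is symmetric about zero, the symmetric form of Proposition \ref{subgaussianproperties}.3 (without the $\sqrt{3.1}$ factor) reduces the claim to the even-moment bound
\[\E[Y^{2k}]\le\left(\frac1{4(2\alpha+1)}\right)^{\!k}(2k-1)!!\qquad\text{for }k\ge 2.\]
The density of $Y$ is proportional to $(1/4-y^2)^{\alpha-1}$ on $[-\tfrac12,\tfrac12]$, and the substitution $y=u/2$ converts the moment integrals into beta functions, giving
\[\E[Y^{2k}]=\frac{1}{4^k}\cdot\frac{(1/2)_k}{(1/2+\alpha)_k}=\frac{(2k-1)!!}{4^k\prod_{j=0}^{k-1}(2\alpha+2j+1)}.\]
Every factor $(2\alpha+2j+1)$ with $j\ge 1$ strictly exceeds $2\alpha+1$, so the denominator is at least $(2\alpha+1)^k$ and the moment bound follows. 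This incidentally recovers the strict subgaussianity of $\Beta(\alpha,\alpha)$ noted in the remark.

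\textbf{Stage 2 (asymmetric case).} Set $s=\alpha+\beta$ and $\mu_{\alpha,\beta}=\alpha/s$. I would aim to prove the pointwise comparison of centered cumulant generating functions
\[\psi_{\alpha,\beta}(\lambda):=\log\E\!\left[e^{\lambda(X-\mu_{\alpha,\beta})}\right]\le\psi_{s/2,\,s/2}(\lambda)\qquad\text{for all }\lambda\in\mathbb{R},\]
from which the conjecture follows immediately from Stage 1. Because $\psi_{\beta,\alpha}(\lambda)=\psi_{\alpha,\beta}(-\lambda)$, it suffices to handle $\alpha\le\beta$ with $\lambda\ge 0$. The natural route is to show that, at fixed $s$ and $\lambda$, $\psi_{\alpha,\beta}(\lambda)$ is concave in $\alpha$ and symmetric under $\alpha\leftrightarrow s-\alpha$, so its maximum occurs at $\alpha=s/2$. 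Using the integral representation of Kummer's confluent hypergeometric function
\[M(\alpha,s,\lambda)=\frac{1}{B(\alpha,s-\alpha)}\int_0^1 e^{\lambda t}\,t^{\alpha-1}(1-t)^{s-\alpha-1}\,dt,\]
one can express $\partial_\alpha\psi_{\alpha,\beta}$ as a covariance under an exponentially tilted beta measure that vanishes at $\alpha=s/2$ by symmetry; the required concavity should then follow from a Cauchy--Schwarz or log-concavity argument applied to that tilted measure.

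\textbf{Main obstacle.} Stage 2 is unambiguously the hard part. Although the asymmetric beta has smaller variance than the symmetric one (by AM--GM on $\alpha\beta$), its tail toward the nearer endpoint is heavier, so MGF dominance does not follow from variance dominance; a genuine structural argument is required. Carrying out the concavity step is delicate because shifting $\alpha$ simultaneously moves the mean and reshapes the density. Two alternative routes worth trying are: (i) exploit Kummer's ODE $\lambda M''+(s-\lambda)M'-\alpha M=0$ to derive a differential inequality for $\psi_{\alpha,\beta}$ and close it using the Stage 1 bound at $\alpha=s/2$; or (ii) couple $\Beta(\alpha,\beta)$ with $\Beta(s/2,s/2)$ through Bayesian updating on biased Bernoulli observations and argue that the conditioning contracts the MGF. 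The direct moment approach via Proposition \ref{subgaussianproperties}.3 cannot close the gap, since the $\sqrt{3.1}$ factor incurred by asymmetric centered moments only recovers the weaker Theorem \ref{betasub}.
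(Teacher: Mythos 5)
First, note that the paper does not prove this statement: it is stated as an open conjecture, supported only by numerical evidence, and the author explicitly observes that the termwise raw-moment method of Proposition \ref{weakbetasub} cannot reach the constant $4(\alpha+\beta+1)$. So there is no proof in the paper to compare against, and your attempt must stand on its own. Your Stage 1 does stand: the computation $\E[(X-\tfrac12)^{2k}]=(2k-1)!!\big/\big(4^k\prod_{j=0}^{k-1}(2\alpha+2j+1)\big)\le(2k-1)!!\,(4(2\alpha+1))^{-k}$, combined with the symmetric form of Proposition \ref{subgaussianproperties}.3, is a correct and clean proof that $\Beta(\alpha,\alpha)$ is strictly subgaussian with proxy $\tfrac1{4(2\alpha+1)}$ --- a rigorous version of something the Remark only supports numerically, and already stronger than Theorem \ref{betasub} in the symmetric case.

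Stage 2, however, is not merely unfinished: its target inequality is false. You want $\psi_{\alpha,\beta}(\lambda)\le\psi_{s/2,s/2}(\lambda)$ for all $\lambda$, but for $\alpha<\beta$ the centered variable $X-\alpha/s$ has support reaching $\beta/s>\tfrac12$ on the right, and Watson's lemma at the endpoint $x=1$ gives $\psi_{\alpha,\beta}(\lambda)=\lambda\beta/s-\beta\log\lambda+O(1)$ as $\lambda\to+\infty$, versus $\psi_{s/2,s/2}(\lambda)=\lambda/2-(s/2)\log\lambda+O(1)$; the difference is $\tfrac{2\beta-s}{2}(\lambda/s-\log\lambda)+O(1)\to+\infty$. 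Concretely, $\E[e^{\lambda X}]=2(e^{\lambda}-1-\lambda)/\lambda^{2}$ for $\Beta(1,2)$, so $\psi_{1,2}(\lambda)\sim 2\lambda/3$ while $\psi_{3/2,3/2}(\lambda)\sim\lambda/2$. The symmetry you invoke is also misstated: the map $\alpha\leftrightarrow s-\alpha$ sends $\psi_{\alpha,s-\alpha}(\lambda)$ to $\psi_{\alpha,s-\alpha}(-\lambda)$, not to itself, so $\partial_\alpha\psi$ need not vanish at $\alpha=s/2$ for fixed $\lambda\neq0$ (for large $\lambda$ it is approximately $-\lambda/s$), and the ``concave and symmetric, hence maximized at the midpoint'' plan collapses at the first step. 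Any successful argument must compare $\psi_{\alpha,\beta}(\lambda)$ directly to the quadratic $\lambda^{2}/(8(s+1))$ rather than to the symmetric beta's cumulant generating function; this is in fact how the conjecture was later proved by Marchal and Arbel via a differential inequality for the confluent hypergeometric function, close in spirit to your alternative route (i), but not via the reduction you propose.
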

Before proving Theorem \ref{betasub}, let's see what this concentration result means for a nearly trivial case of Bayesian adaptive data analysis.
\begin{cor}If the prior on the parameter of a Bernoulli distribution is given by $\Beta(\alpha,\beta)$ for any $\alpha,\beta$, the posterior mean curator strategy wins.\label{bernoullicor}\end{cor}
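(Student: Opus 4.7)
The plan is to reduce the corollary to Proposition \ref{subgaussiansufficiency} (or equivalently the counting-query form, Proposition \ref{countingsufficiency}) by verifying that the curator's posterior is $O(1/n)$-subgaussian under every statistical query. Once this is established, the conclusion is immediate from the framework of Section \ref{sqa}.

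Concretely, the first step is to record the posterior explicitly: by the conjugacy calculation just given in the excerpt, after observing $n$ Bernoulli samples the posterior on the success probability $p$ is $\Beta(\alpha', \beta')$ with $\alpha' + \beta' = \alpha + \beta + n$, regardless of how the successes and failures are distributed. The second step is to observe that any statistical query $f:\{0,1\}\to[0,1]$ acts on $p$ by the affine map
\[
\E_{x\sim\mathrm{Bern}(p)}[f(x)] = f(0) + (f(1) - f(0))\, p,
\]
whose slope has absolute value at most $1$. Hence the distribution that the query induces on the curator's posterior is an affine image of $\Beta(\alpha',\beta')$ with contraction factor $|f(1)-f(0)|\le 1$.

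The final step is to apply Theorem \ref{betasub} together with the norm property of $\tau$ from Proposition \ref{subgaussianproperties}.2. Theorem \ref{betasub} gives $\tau^2(\Beta(\alpha',\beta')) \le \frac{1}{4(\alpha+\beta+n)+2}$, and the scaling identity $\tau(aX+b)=|a|\tau(X)$ then yields
\[
\tau^2\bigl(\E_{x\sim\mathrm{Bern}(p)}[f(x)]\bigr) \le (f(1)-f(0))^2 \cdot \frac{1}{4(\alpha+\beta+n)+2} \le \frac{1}{4n+O(1)} = O(1/n),
\]
uniformly over all queries $f$. Plugging this into Proposition \ref{subgaussiansufficiency} produces the desired static sample complexity.

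There is no substantive obstacle here: the corollary essentially serves as a sanity check that Theorem \ref{betasub} slots cleanly into the subgaussianity framework. The only bookkeeping is the affine reparameterization from $p$ to $\E_{x\sim p}[f(x)]$ and the harmless contraction by $|f(1)-f(0)|\le1$, which can only shrink the variance proxy. All of the real work was already absorbed into Theorem \ref{betasub}.
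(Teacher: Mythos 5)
Your proposal is correct and follows essentially the same route as the paper: identify the posterior as $\Beta(\alpha',\beta')$ with $\alpha'+\beta'=\alpha+\beta+n>n$, invoke Theorem \ref{betasub} to get the $O(1/n)$ variance proxy, and feed this into Proposition \ref{subgaussiansufficiency}. The only cosmetic difference is that you handle general queries directly via the affine map $p\mapsto f(0)+(f(1)-f(0))p$ and the scaling property of $\tau$, whereas the paper routes through Proposition \ref{countingsufficiency} and the observation that the two counting queries are $X$ and $1-X$; both are valid and equivalent here.
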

\begin{proof}[Proof]If the prior is $\Beta(\alpha,\beta)$, the posterior is $\Beta(\alpha',\beta')$ with $\alpha'+\beta'=\alpha+\beta+n>n$.

With a Bernoulli trial, there are two nontrivial counting queries, the probabilities of success and failure, or $X$ and $1-X$. The latter is an affine function of $X$, so by Propositions \ref{subgaussianproperties}.2 and \ref{countingsufficiency} it suffices to show that $X\sim\Beta(\alpha',\beta')$ is $O(1/n)$-subgaussian.

Indeed, Theorem \ref{betasub} shows that the posterior is $\dfrac1{4(\alpha'+\beta')+2}<\dfrac1{4n}$-subgaussian, so by Proposition \ref{subgaussiansufficiency}, the posterior mean curator strategy wins.\end{proof}

Thus, we see that as a conjugate family, the beta distributions have an interesting property: Only some beta distributions can be posteriors after observing and updating on $n$ data points. All such posteriors have $\alpha+\beta>n$, and Theorem \ref{betasub} says that they must concentrate to the degree we require.

\subsection{Probabilistic Proof}

We will provide two very different proofs of results like Theorem \ref{betasub}. We'll start with the probabilistic result, which actually can only produce a weaker constant:
\begin{prop}The beta distribution $\Beta(\alpha,\beta)$ is $\dfrac1{2(\alpha+\beta+1)}$-subgaussian.\label{weakbetasub}\end{prop}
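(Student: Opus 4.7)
My plan is to bound the moment generating function of $X-\mu$ for $X\sim\Beta(\alpha,\beta)$, with $\mu=\alpha/(\alpha+\beta)$, directly from the explicit raw-moment formula (\ref{betamoments}). Setting $s=\alpha+\beta$, the target inequality is
\[
\E[e^{\lambda(X-\mu)}] \;=\; e^{-\lambda\mu}\sum_{k=0}^\infty\frac{\lambda^k}{k!}\cdot\frac{(\alpha)_k}{(s)_k} \;\le\; \exp\!\left(\frac{\lambda^2}{4(s+1)}\right),
\]
which is exactly $\frac{1}{2(s+1)}$-subgaussianity.

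I would proceed in three stages. First, I would prove a general lemma: if a nonnegative random variable $X$ with mean $\mu$ has raw moments satisfying a product-type upper bound $\E[X^k]\le\mu^k\prod_{j=1}^{k-1}(1+c_j)$ for some nonnegative sequence $(c_j)$, then $X-\mu$ is one-sidedly subgaussian with a variance proxy expressible in terms of the $c_j$. The natural proof factors out $e^{-\lambda\mu}$, reorganizes the resulting series, and compares termwise against the Gaussian MGF $e^{\lambda^2\sigma^2/2}$. Second, I would specialize to beta: formula (\ref{betamoments}) gives $\E[X^k]/\mu^k=\prod_{j=1}^{k-1}\bigl(1+j\beta/(\alpha(s+j))\bigr)$, which is already in the product form above, and careful bookkeeping should deliver the claimed variance proxy $1/(2(s+1))$. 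Third, I would exploit the symmetry $1-X\sim\Beta(\beta,\alpha)$: under this substitution $\lambda(X-\mu)$ changes sign while the theorem statement is invariant, so a one-sided bound for $\lambda\ge0$ valid for all $(\alpha,\beta)$ automatically yields the matching bound for $\lambda\le0$, giving full (two-sided) subgaussianity.

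The main obstacle will be calibrating the moment bound tightly enough to recover the constant $\frac{1}{2(s+1)}$ rather than a looser $O(1/s)$ estimate. Beta's moments interleave $\alpha$, $\beta$, and $s$ in a way that does not obviously factor the $s+1$ in the denominator of the target variance proxy, and an overly crude bound such as $\prod(1+c_j)\le e^{\sum c_j}$ will sacrifice too much. I expect the cleanest route is a term-by-term comparison of the two Taylor expansions, using the telescoping structure of $(\alpha)_k/(s)_k$ to cancel the $s+j$ factors against the combinatorial denominators that arise in the expansion of $\exp(\lambda\mu+\lambda^2/(4(s+1)))$. This is where working with raw moments (rather than going through centered moments via Proposition \ref{subgaussianproperties}.3) pays off: the denominator structure of the raw moments matches that of the target MGF naturally, avoiding the $\sqrt{3.1}$ loss one would incur on the centered-moment side for a non-symmetric distribution.
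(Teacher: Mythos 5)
Your overall scaffolding matches the paper's probabilistic proof: bound $\E[e^{\lambda X}]$ termwise in $\lambda$ against $\exp\bigl(\lambda\mu+\lambda^2/(4(s+1))\bigr)$ using the raw moments \eqref{betamoments}, then use $1-X\sim\Beta(\beta,\alpha)$ to get $\lambda<0$ for free. The symmetry step and the decision to work with raw rather than centered moments are both exactly right. But the crux --- the ``careful bookkeeping'' you defer --- is where your proposed decomposition breaks. Writing $\E[X^k]=\mu^k\prod_{j=1}^{k-1}(1+c_j)$ with $c_j=j\beta/(\alpha(s+j))$ and expanding, the size-$m$ subsets contribute $\frac{\mu^{k-2m}}{k!}e_m(\mu^2c_1,\dotsc,\mu^2c_{k-1})$, which you must match against the $m$-th term $\frac{\mu^{k-2m}}{(k-2m)!}\cdot\frac{1}{m!}\bigl(\frac{1}{4(s+1)}\bigr)^m$ of the target expansion. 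Since $\mu^2c_j=\frac{\alpha\beta}{s^2}\cdot\frac{j}{s+j}$, taking $\alpha=\beta$ and $s$ large makes $\mu^2c_j\to\frac{j}{4s}$ with essentially no slack, and the required inequality degenerates to $e_m(1,2,\dotsc,k-1)\le\frac{k!}{(k-2m)!\,m!}$. This fails already at $k=6$, $m=3$: $e_3(1,2,3,4,5)=225$ while $\frac{6!}{0!\,3!}=120$. The full $\lambda^6$ coefficient inequality is still true (the surplus at $m=1,2$ covers the deficit at $m=3$), but your subset-size-by-subset-size comparison cannot see that, so the natural implementation of your plan does not close.

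The missing idea is to group the moment factors in \emph{pairs} and bound each pair additively rather than each single factor multiplicatively. The paper's key Lemma \ref{betatechnical} shows
\[\frac{\alpha+j}{s+j}\cdot\frac{\alpha+j+1}{s+j+1}\le\mu^2+\frac{j+1}{2(s+1)},\]
proved by induction on $j$, so that $\E[X^{2k}]\le\prod_{i=1}^{k}\bigl(\mu^2+\frac{2i-1}{2(s+1)}\bigr)$ (and similarly for odd moments after peeling off one factor of $\mu$). Expanding \emph{this} product, each pair contributes either a $\mu^2$ or a term of size at most $\frac{2i-1}{2(s+1)}$, and the elementary-symmetric-function bound $e_{k-l}\le\binom{k}{l}(2k-1)(2k-3)\dotsm(2l+1)\cdot(2(s+1))^{-(k-l)}$ now lines up exactly with the double factorials in the expansion of $\exp(\lambda\mu+\lambda^2/(4(s+1)))$. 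The pairing is what makes the powers of $\mu$ come out even and the combinatorics match; it is also why the paper's general raw-moment criterion (Lemma \ref{momentbound}) is phrased as an additive bound on the two-step ratio $\E[X^{j+2}]/\E[X^j]\le\E[X]^2+(j+1)\sigma^2$ rather than a multiplicative bound on the one-step ratio. You would need to replace your stage-one lemma with something of this form for the argument to go through.
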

\begin{proof}Notice that technically, this is a claim about the centered beta distribution, or $X-\E X=X-\frac\alpha{\alpha+\beta}$. However, we do not prove this claim from the centered moments, but from the raw moments themselves. That is, rather than proving
\[\E\left[\exp\left(\lambda\left(X-\frac\alpha{\alpha+\beta}\right)\right)\right]\le\exp\left(\lambda^2\frac1{4(\alpha+\beta+1)}\right)\]
by expanding termwise in $\lambda$, we move the mean term to the other side and show
\begin{equation}\E[\exp(\lambda X)]\le\exp\left(\lambda\frac{\alpha}{\alpha+\beta}+\lambda^2\frac1{4(\alpha+\beta+1)}\right)\label{betabound}\end{equation}
by expanding termwise in $\lambda$. The coefficients of the left side are the raw moments given in \eqref{betamoments}. To bound the terms in this expansion, we will need the following technical lemma:
\begin{lem}For any nonnegative integer $j$ and $\alpha,\beta>0$,
\[\frac{\alpha+j}{\alpha+\beta+j}\cdot\frac{\alpha+j+1}{\alpha+\beta+j+1}\le\left(\frac\alpha{\alpha+\beta}\right)^2+\frac{j+1}{2(\alpha+\beta+1)}.\]\label{betatechnical}\end{lem}
Before proving this lemma, let us see how it implies Theorem \ref{betasub}. First, we consider the even terms:
\begin{align*}
[\lambda^{2k}]\E[\exp(\lambda X)]&=\frac1{(2k)!}\E[X^{2k}]=\frac1{(2k)!}\left(\frac{\alpha}{\alpha+\beta}\cdot\frac{\alpha+1}{\alpha+\beta+1}\right)\dotsm\left(\frac{\alpha+2k-2}{\alpha+\beta+2k-2}\cdot\frac{\alpha+2k-1}{\alpha+\beta+2k-1}\right)\\
&\le\frac1{(2k)!}\left(\left(\frac\alpha{\alpha+\beta}\right)^2+\frac1{2(\alpha+\beta+1)}\right)\dotsm\left(\left(\frac\alpha{\alpha+\beta}\right)^2+\frac{2k-1}{2(\alpha+\beta+1)}\right)\\
&\le\frac1{(2k)!}\sum_{l=0}^k\left(\frac\alpha{\alpha+\beta}\right)^{2l}\left(\frac1{2(\alpha+\beta+1)}\right)^{k-l}\binom kl(2k-1)(2k-3)\dotsm(2l+1)\\
&=\frac1{(2k)!}\sum_{l=0}^k\left(\frac\alpha{\alpha+\beta}\right)^{2l}\left(\frac1{4(\alpha+\beta+1)}\right)^{k-l}\frac{(2k)!}{(2l)!(k-l)!}\\
&=\sum_{l=0}^k\frac1{(2l)!}\left(\frac\alpha{\alpha+\beta}\right)^{2l}\frac1{(k-l)!}\left(\frac1{4(\alpha+\beta+1)}\right)^{k-l}\\
&=[\lambda^{2k}]\exp\left(\frac{\lambda\alpha}{\alpha+\beta}+\frac{\lambda^2}{4(\alpha+\beta+1)}\right).
\end{align*}
In going from the first to the second line, we have replaced the pairs of consecutive terms by applying Lemma \ref{betatechnical}. From the second to the third, we have expanded the product and grouped terms, upper bounding the sum of all products of $k-l$ of the right fractions by $\binom kl$ times the largest of them. From the third to the fourth lines, we have expanded $\binom kl=\frac{k(k-1)\dotsc(l+1)}{(k-l)!}$ and doubled the terms in the numerator to interleave with the odd terms, adding an extra factor of 2 to the fraction raised to the $k-l$ power.

The odd terms are similar, except we pair the terms up in the moment starting from the second.
\begin{align*}
[\lambda^{2k+1}]\E[\exp(\lambda X)]&=\frac1{(2k+1)!}\E[X^{2k+1}]\\
&=\frac1{(2k+1)!}\frac\alpha{\alpha+\beta}\left(\frac{\alpha+1}{\alpha+\beta+1}\cdot\frac{\alpha+2}{\alpha+\beta+2}\right)\dotsm\left(\frac{\alpha+2k-1}{\alpha+\beta+2k-1}\cdot\frac{\alpha+2k}{\alpha+\beta+2k}\right)\\
&\le\frac1{(2k+1)!}\frac\alpha{\alpha+\beta}\left(\left(\frac\alpha{\alpha+\beta}\right)^2+\frac2{2(\alpha+\beta+1)}\right)\dotsm\left(\left(\frac\alpha{\alpha+\beta}\right)^2+\frac{2k}{2(\alpha+\beta+1)}\right)\\
&\le\frac1{(2k+1)!}\sum_{l=0}^k\left(\frac\alpha{\alpha+\beta}\right)^{2l+1}\left(\frac1{2(\alpha+\beta+1)}\right)^{k-l}\binom kl(2k+1)(2k-1)\dotsm(2l+3)\\
&=\frac1{(2k+1)!}\sum_{l=0}^k\left(\frac\alpha{\alpha+\beta}\right)^{2l+1}\left(\frac1{4(\alpha+\beta+1)}\right)^{k-l}\frac{(2k+1)!}{(2l+1)!(k-l)!}\\
&=\sum_{l=0}^k\frac1{(2l+1)!}\left(\frac\alpha{\alpha+\beta}\right)^{2l+1}\frac1{(k-l)!}\left(\frac1{4(\alpha+\beta+1)}\right)^{k-l}\\
&=[\lambda^{2k+1}]\exp\left(\frac{\lambda\alpha}{\alpha+\beta}+\frac{\lambda^2}{4(\alpha+\beta+1)}\right).
\end{align*}
The only other major difference is that between the third and fourth lines, we also replaced the numerators of $2k,2k-2,\dotsc,2$ with the corresponding larger values of $2k+1,2k-1,\dotsc,3$. Therefore, for $\lambda>0$, we have shown that all of the terms of the left side of \eqref{betabound} are bounded by the corresponding terms of the right side.

To prove \eqref{betabound} for $\lambda<0$, we utilize the symmetry of the beta distribution: $\Beta(\beta,\alpha)=1-\Beta(\alpha,\beta)$. Therefore, we've in fact also shown that for $\lambda>0$,
\[\E[e^{\lambda(1-X)}]\le\exp\left(\frac{\lambda\beta}{\alpha+\beta}+\frac{\lambda^2}{4(\alpha+\beta+1)}\right).\]
Dividing both sides by $e^\lambda$, we immediately get
\[\E[e^{-\lambda X}]\le\exp\left(-\frac{\lambda\alpha}{\alpha+\beta}+\frac{\lambda^2}{4(\alpha+\beta+1)}\right),\]
so the desired bound holds for $\lambda<0$ as well. We are done, apart from proving the technical lemma.\end{proof}
\begin{proof}[Proof of Lemma \ref{betatechnical}]We induct on $j$, starting with two base cases: $j=0$ and $j=1$.
\begin{bc}For $j=0$, we have
\begin{align*}
\frac\alpha{\alpha+\beta}\cdot\frac{\alpha+1}{\alpha+\beta+1}-\left(\frac\alpha{\alpha+\beta}\right)^2&=\frac\alpha{\alpha+\beta}\cdot\frac\beta{(\alpha+\beta)(\alpha+\beta+1)}\\
&\le\frac{(\alpha+\beta)^2/4}{(\alpha+\beta)^2(\alpha+\beta+1)}=\frac1{4(\alpha+\beta+1)}<\frac1{2(\alpha+\beta+1)},
\end{align*}
where we used the AM-GM inequality $\sqrt{\alpha\beta}\le(\alpha+\beta)/2$. Now, if $j=1$, we have
\begin{align*}
\frac{\alpha+1}{\alpha+\beta+1}\cdot\frac{\alpha+2}{\alpha+\beta+2}-\left(\frac\alpha{\alpha+\beta}\right)^2&=\frac{(\alpha+1)(\alpha+2)(\alpha+\beta)^2-\alpha^2(\alpha+\beta+1)(\alpha+\beta+2)}{(\alpha+\beta)^2(\alpha+\beta+1)(\alpha+\beta+2)}\\
&=\frac{(\alpha^2+3\alpha+2)(\alpha+\beta)^2-\alpha^2((\alpha+\beta)^2+3(\alpha+\beta)+2)}{(\alpha+\beta)^2(\alpha+\beta+1)(\alpha+\beta+2)}\\
&=\frac{(3\alpha+2)(\alpha+\beta)^2-\alpha^2(3(\alpha+\beta)+2)}{(\alpha+\beta)^2(\alpha+\beta+1)(\alpha+\beta+2)}\\
&=\frac{3\alpha\beta(\alpha+\beta)+4\alpha\beta+2\beta^2}{(\alpha+\beta)^2(\alpha+\beta+1)(\alpha+\beta+2)}\\
&\le\frac{(\alpha+\beta)^3+2(\alpha+\beta)^2}{(\alpha+\beta)^2(\alpha+\beta+1)(\alpha+\beta+2)}=\frac1{\alpha+\beta+1},
\end{align*}
as desired.\end{bc}
\begin{is}For the inductive step, take $j\ge2$. Then
\begin{align*}
\frac{\alpha+j}{\alpha+\beta+j}\cdot\frac{\alpha+j+1}{\alpha+\beta+j+1}-\frac{\alpha+j-1}{\alpha+\beta+j-1}\cdot\frac{\alpha+j}{\alpha+\beta+j}&=\frac{\alpha+j}{\alpha+\beta+j}\cdot\frac{2\beta}{(\alpha+\beta+j-1)(\alpha+\beta+j+1)}\\
&\le\frac{(\alpha+\beta+j)^2/2}{(\alpha+\beta+j-1)(\alpha+\beta+j)(\alpha+\beta+j+1)}\\
&<\frac1{2(\alpha+\beta+j-1)}\le\frac1{2(\alpha+\beta+1)}\\
\frac{\alpha+j}{\alpha+\beta+j}\cdot\frac{\alpha+j+1}{\alpha+\beta+j+1}-\left(\frac\alpha{\alpha+\beta}\right)^2&<\frac1{2(\alpha+\beta+1)}+\frac j{2(\alpha+\beta+1)}=\frac{j+1}{2(\alpha+\beta+1)},
\end{align*}
where we applied the inductive hypothesis to get the last line.\qedhere\end{is}\end{proof}

Unfortunately, it doesn't appear possible to squeeze an additional factor of 2 out of this method. In fact, \eqref{betabound} is not true termwise if we replace the $4(\alpha+\beta+1)$ denominator with $8(\alpha+\beta+1)$.\footnote{For a concrete example, when $\alpha=1$ and $\beta=2$, the $\lambda^4$ terms have coefficients of $1/360>1363/497664$, respectively, the opposite direction as needed to prove \eqref{betabound}. However, the coefficients of $\lambda^2$ and $\lambda^6$ correct for this $\lambda^4$ term, and $\Beta(1,2)$ is still $1/4(1+2+1)=1/16$-subgaussian.} Therefore any proof of Conjecture \ref{betaconj} will have to use a different method.

\subsection{Learning-Theoretic Proof}

Quite surprisingly, we can get a simpler and stronger result by considering the beta distribution as a Bayesian posterior. The key is Azuma's Inequality:
\begin{lem}[Azuma's Inequality (c.f. \cite{van2014probability} Lemma 3.7)]Let $\{X_k\}_{k\in\N}$ be a martingale adapted to the filtration $\{\F_k\}_{k\in\N}$ such that for all $k$, $(X_k-X_{k-1})|\F_{k-1}$ is $\sigma_k^2$-subgaussian. Then $(X_n-X_0)|\F_0$ is $\sum_{k=1}^n\sigma_k^2$-subgaussian.\end{lem}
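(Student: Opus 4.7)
The plan is to proceed by induction on $n$, using the tower property of conditional expectation to peel off one martingale increment at a time and then invoke the single-step subgaussian hypothesis. The statement to prove is that $\E[\exp(\lambda(X_n-X_0))\mid\F_0]\le\exp\bigl(\tfrac{\lambda^2}{2}\sum_{k=1}^n\sigma_k^2\bigr)$ for every real $\lambda$; note that since $\{X_k\}$ is a martingale, the increment $X_k-X_{k-1}$ has conditional mean zero given $\F_{k-1}$, so the subgaussian hypothesis on $(X_k-X_{k-1})\mid\F_{k-1}$ applies to the raw (uncentered) random variable and not just to its centered version.

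First I would dispose of the base case $n=0$, where $X_n-X_0=0$ and both sides equal $1$. For the inductive step, I would write
\[\exp(\lambda(X_n-X_0))=\exp(\lambda(X_{n-1}-X_0))\cdot\exp(\lambda(X_n-X_{n-1})),\]
condition on $\F_{n-1}\supseteq\F_0$, and pull out the $\F_{n-1}$-measurable factor $\exp(\lambda(X_{n-1}-X_0))$. The inner conditional expectation $\E[\exp(\lambda(X_n-X_{n-1}))\mid\F_{n-1}]$ is bounded by $\exp(\lambda^2\sigma_n^2/2)$ pointwise by the hypothesis that the increment is $\sigma_n^2$-subgaussian given $\F_{n-1}$. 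Applying the tower property and then the inductive hypothesis to the remaining factor yields
\[\E[\exp(\lambda(X_n-X_0))\mid\F_0]\le\exp(\lambda^2\sigma_n^2/2)\cdot\exp\!\left(\frac{\lambda^2}{2}\sum_{k=1}^{n-1}\sigma_k^2\right),\]
which is exactly the claimed bound.

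There is no real obstacle here; the only subtlety worth flagging is the need to verify that the subgaussian MGF bound on a conditional distribution can legitimately be used pointwise inside a further expectation, which follows from the almost-sure version of the subgaussian condition (the inequality $\E[\exp(\lambda Y)\mid\G]\le\exp(\lambda^2\sigma^2/2)$ holds on a $\lambda$-dependent a.s.\ event, but taking a countable dense set of $\lambda$ and using continuity of the MGF in $\lambda$ gives a single full-measure event that works for all $\lambda$). Since the paper's definition of subgaussian requires the bound on $\E[\exp(\lambda X)]$ for all $\lambda$, no separate treatment of $\lambda<0$ is needed.
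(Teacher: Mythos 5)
The paper does not include its own proof of this lemma—it cites it from the reference—but your argument is precisely the standard proof given there: induction on $n$, factoring $\exp(\lambda(X_n-X_0))$ into the last increment times the rest, conditioning on $\F_{n-1}$, bounding the inner conditional expectation by the single-step hypothesis, and finishing with the tower property and the inductive hypothesis. The proof is correct, including the correct observation that the martingale property makes each increment conditionally centered (so the paper's zero-mean subgaussian definition applies directly) and the careful handling of the $\lambda$-dependent null sets.
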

\begin{proof}[Proof of Theorem \ref{betasub}]The martingale we will construct is surprisingly related to our problem: Let $\F_0$ be the $\Beta(\alpha,\beta)$ prior over the parameter of a Bernoulli random variable, let $\F_k$ be the updated information upon receiving $k$ samples from the random variable, and let $X_k$ be the resulting posterior mean. Then as $n\to\infty$, $X_n$ approaches the true parameter of the random variable almost surely, so $X_n-X_0$ approaches the error of the original posterior mean.

Moreover, it is fairly elementary to check that $X_k-X_{k-1}$ is a martingale with the appropriate subgaussian variance proxy. Suppose that the posterior after the first $k-1$ data points is $\Beta(\alpha',\beta')$, where of course $\alpha'+\beta'=\alpha+\beta+k-1$. Then the posterior mean $X_{k-1}=\dfrac{\alpha'}{\alpha'+\beta'}$. Conditioning on the samples the curator has seen so far, the next sample will be a success with probability $X_{k-1}$ and a failure with probability $1-X_{k-1}$. That is:
\begin{align*}
\text{w.p. }\frac{\alpha'}{\alpha'+\beta'},\quad X_k=\frac{\alpha'+1}{\alpha'+\beta'+1}&\implies X_k-X_{k-1}=\frac{\alpha'+1}{\alpha'+\beta'+1}-\frac{\alpha'}{\alpha'+\beta'}=\frac{\beta'}{(\alpha'+\beta')(\alpha'+\beta'+1)};\\
\text{w.p. }\frac{\beta'}{\alpha'+\beta'},\quad X_k=\frac{\alpha'}{\alpha'+\beta'+1}&\implies X_k-X_{k-1}=\frac{\alpha'}{\alpha'+\beta'+1}-\frac{\alpha'}{\alpha'+\beta'}=\frac{-\alpha'}{(\alpha'+\beta')(\alpha'+\beta'+1)}
\end{align*}
This difference is clearly mean-zero, so $\{X_k\}$ is indeed a martingale. As a centered binary random variable, by Theorem 3.1 in \cite{buldygin2013sub}, the subgaussian variance proxy of $X_k-X_{k-1}$ is given by
\[\tau^2(X_k-X_{k-1})=\frac1{(\alpha'+\beta'+1)^2}K\left(\frac{\alpha'}{\alpha'+\beta'}\right),\text{ where }K(p)=\begin{cases}
0&\:p\in\{0,1\}\\
\frac14&\:p=\frac12\\
\frac{p-(1-p)}{2(\ln p-\ln(1-p))}&\:\text{otherwise}.\end{cases}\]
In particular (Lemma 2.1 in \cite{buldygin2013sub}), $K(p)\le\frac14$ for all $p\in[0,1]$, so $X_k-X_{k-1}$ is $\dfrac1{4(\alpha'+\beta'+1)^2}=\dfrac1{4(\alpha+\beta+k)^2}$-subgaussian.

Therefore, by Azuma's inequality, $\Beta(\alpha,\beta)$ is subgaussian with variance proxy
\begin{align*}
\sum_{k=1}^\infty\frac1{4(\alpha+\beta+k)^2}&\le\sum_{k=1}^\infty\frac1{4(\alpha+\beta+k+1/2)(\alpha+\beta+k-1/2)}\\
&=\sum_{k=1}^\infty\left(\frac1{4(\alpha+\beta+k-1/2)}-\frac1{4(\alpha+\beta+k+1/2)}\right)=\frac1{4(\alpha+\beta)+2},
\end{align*}
as desired.\footnote{The final steps of this analysis are also tight; $\sum_{k=1}^\infty1/(4(\alpha+\beta+k)^2)\ge1/((4(\alpha+\beta)+2+1/(3(\alpha+\beta)))$.}\end{proof}

Impressively, this method is able to prove a strictly stronger result than the probabilistic approach, matching the correct coefficient on $\alpha+\beta$ for the symmetric case.

\section{Dirichlet Priors on Categorical Random Variables}\label{dpcrv}

Of course, the example considered in Corollary \ref{bernoullicor} is a nearly trivial example of Bayesian adaptive data analysis: There is (essentially) only one possible query, making adaptivity meaningless. However, the result fortunately generalizes to a much more useful framework: Dirichlet priors on categorical random variables.

Recall that a categorical random variable has support $\{1,\dotsc,k\}$ for some positive integer $k$, and probabilities $p_1,\dotsc,p_k$ for each value. $k=2$ corresponds to a Bernoulli trial again, but if $k>2$, there are many possible queries, each corresponding to a vector $\vec v=(f(1),\dotsc,f(k))\in[0,1]^k$ and asking for the dot product $\vec p\cdot\vec v$.

The conjugate prior for the categorical random variable is the Dirichlet distribution $\Dir(\alpha_1,\dotsc,\alpha_k)$, the natural generalization of the beta distribution. Its probability density function is given by
\[\frac{\Gamma(\alpha_1+\dotsb+\alpha_k)}{\Gamma(\alpha_1)\dotsm\Gamma(\alpha_k)}x_1^{\alpha_1}\dotsm x_k^{\alpha_k}.\]
Therefore, upon receiving data with counts $c_i$ of category $i$, the posterior is given by $\Dir(\alpha_1+c_1,\dotsc,\alpha_k+c_k)$, making the Dirichlet distribution a conjugate prior for the categorical distribution.

\begin{thm}In the direction of any query vector $\vec v$, the Dirichlet distribution is $\frac1{2(\alpha_1+\dotsc+\alpha_k+1)}$-subgaussian.\label{Dirichletbound}\end{thm}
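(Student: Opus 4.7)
The plan is to reduce to the already-proven Beta case via the well-known aggregation property of the Dirichlet distribution. Specifically, if $(p_1,\dotsc,p_k)\sim\Dir(\alpha_1,\dotsc,\alpha_k)$ and $S\subseteq\{1,\dotsc,k\}$, then $\sum_{i\in S}p_i\sim\Beta(\sum_{i\in S}\alpha_i,\sum_{i\notin S}\alpha_i)$. This is a standard fact that follows either by marginalizing the Dirichlet density or conceptually by representing the Dirichlet as the normalization of independent Gamma random variables.

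First, I would treat a counting query, i.e.\ $\vec v\in\{0,1\}^k$, which is the indicator vector of some $S\subseteq\{1,\dotsc,k\}$. Then $\vec v\cdot\vec p=\sum_{i\in S}p_i$ is Beta-distributed with parameters summing to $\sum_i\alpha_i$, so Theorem \ref{betasub} gives that this one-dimensional projection is $\frac{1}{4\sum_i\alpha_i+2}$-subgaussian. This is strictly better than the target variance proxy $\frac{1}{2(\sum_i\alpha_i+1)}$; in fact, even the weaker Proposition \ref{weakbetasub} by itself would match the stated bound exactly, so either Beta result suffices.

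For a general query vector $\vec v\in[0,1]^k$, I would follow the argument from the proof of Proposition \ref{countingsufficiency}: write $\vec v$ as a convex combination of vertices of the hypercube $\{0,1\}^k$, so that $\vec v\cdot\vec p$ becomes the corresponding convex combination of counting-query projections. Each of these is $\frac{1}{2(\sum_i\alpha_i+1)}$-subgaussian by the previous paragraph, and the Banach space property of $\tau$ from Proposition \ref{subgaussianproperties}.2 preserves this variance proxy under convex combinations, yielding the claim.

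I do not expect a genuine obstacle: the entire proof is a two-step reduction, with Dirichlet aggregation collapsing the problem to a Beta distribution along any coordinate direction, and hypercube convexity then extending from counting queries to arbitrary queries in $[0,1]^k$. The one point worth double-checking is that the resulting Beta parameters always sum to exactly $\sum_i\alpha_i$ regardless of $S$ (they do, since aggregation partitions the $\alpha_i$), so the bound from Theorem \ref{betasub} applies uniformly across all choices of $S$ and thus, by convexity, across all query vectors.
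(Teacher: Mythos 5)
Your proposal is correct and follows essentially the same route as the paper: reduce to counting queries via the convexity/Banach-space property of $\tau$, observe that $\vec v\cdot\vec p$ for $\vec v\in\{0,1\}^k$ is $\Beta\left(\sum_{i\in S}\alpha_i,\sum_{i\notin S}\alpha_i\right)$, and apply Theorem \ref{betasub}. The only difference is that you invoke the Dirichlet aggregation property as a known fact (with the Gamma-normalization justification), whereas the paper derives it by an explicit change of variables in the density integral, a fact it acknowledges in a footnote ``might be well-known.''
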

In exactly the same way, this guarantees accuracy of the posterior mean:
\begin{cor}If the prior on the parameter of a categorical random variable is $\Dir(\alpha_1,\dotsc,\alpha_k)$ for any $\alpha_1,\dotsc,\alpha_k$, the posterior mean curator strategy wins.\label{Dirichletcor}\end{cor}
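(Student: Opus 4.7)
The plan is to mirror the structure of Corollary \ref{bernoullicor}, which handled the Bernoulli/Beta case, using Theorem \ref{Dirichletbound} as a drop-in replacement for Theorem \ref{betasub}. All the real work has been pushed into Theorem \ref{Dirichletbound}, so the corollary should follow in a few lines.

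First I would observe that because the Dirichlet family is conjugate to the categorical distribution, after the curator observes $n$ samples with counts $c_1,\dotsc,c_k$ (so $\sum c_i = n$), the posterior is $\Dir(\alpha_1+c_1,\dotsc,\alpha_k+c_k)$. Writing $A=\alpha_1+\dotsb+\alpha_k$, the sum of the posterior parameters is $A+n$, which is at least $n$ (and in fact grows by exactly $1$ per sample).

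Next, I would invoke Proposition \ref{countingsufficiency} to reduce from arbitrary statistical queries to counting queries, so that the only projections we need to handle are the ones in directions $\vec v\in\{0,1\}^k$. (One could instead apply Proposition \ref{subgaussiansufficiency} directly, since Theorem \ref{Dirichletbound} is stated for arbitrary query vectors; either route works, but going through counting queries matches the organization established for the Bernoulli case.) For each such $\vec v$, Theorem \ref{Dirichletbound} says that the projection of the posterior along $\vec v$ is $\frac{1}{2(A+n+1)}$-subgaussian, which is certainly $O(1/n)$. Feeding this into Proposition \ref{subgaussiansufficiency} (or the counting-query version, Proposition \ref{countingsufficiency}) yields the static sample complexity $n=O\bigl(\tfrac{1}{\eps^2}\log\tfrac{q}{\delta}\bigr)$ for the posterior mean answering strategy, against any adaptive sequence of $q$ queries.

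There is essentially no obstacle here; the corollary is a direct packaging of Theorem \ref{Dirichletbound} and the sufficiency propositions from Section \ref{sqa}. The only minor subtlety worth a sentence in the writeup is that Theorem \ref{Dirichletbound} applies to the prior form of a Dirichlet distribution, while what we actually need is subgaussianity of the \emph{posterior} after observing the data. This is immediate from the conjugacy computation in the first step, since the posterior is itself a Dirichlet distribution with a larger parameter sum, so Theorem \ref{Dirichletbound} applies unchanged and with a strictly better constant.
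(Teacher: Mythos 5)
Your proposal is correct and follows exactly the route the paper intends: the paper proves Corollary \ref{Dirichletcor} ``in exactly the same way'' as Corollary \ref{bernoullicor}, i.e.\ by noting the posterior is $\Dir(\alpha_1+c_1,\dotsc,\alpha_k+c_k)$ with parameter sum $\alpha_1+\dotsb+\alpha_k+n>n$, applying Theorem \ref{Dirichletbound} to get $O(1/n)$-subgaussianity in every query direction, and concluding via Propositions \ref{countingsufficiency} and \ref{subgaussiansufficiency}. Your remark that the theorem must be applied to the posterior rather than the prior, handled by conjugacy, is the one subtlety and you address it correctly.
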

\begin{proof}[Proof of Theorem \ref{Dirichletbound}]By Proposition \ref{countingsufficiency}, it suffices to check this for counting queries, or $v\in\{0,1\}^k$. We will show that the distribution of the Dirichlet prior with respect to such queries is merely a beta distribution,\footnote{This fact might be well-known, but it also isn't hard to prove. It's at least well-known that the marginals in each category are beta distributions, but those only cover the case where only one $v_i=1$.} and apply Theorem \ref{betasub}.

By relabeling coordinates, we can suppose that $v_1=\dotsb=v_l=1$ and $v_{l+1}=\dotsb=v_k=0$ for some $2\le l\le k-1$ (if all $v_i=0$ or all $v_i=1$ the dot product is always $0$ or $1$ respectively). We first transform the simplex of possible $\vec p$ in a fairly common way by considering the partial sums $s_1=p_1,s_2=p_1+p_2,\dotsc,s_{k-1}=1-p_k$. Then the simplex is given by
\[\Delta_k'=\{s_1,\dotsc,s_{k-1}\in\R:0\le s_1\le s_2\le\dotsb\le s_{k-1}\le 1\}.\]

In this notation, $\vec p\cdot\vec v=t$ corresponds to $p_1+\dotsc+p_l=s_l=t$. The probability density of $\vec p\cdot\vec v$ at $t$ is therefore proportional to the $(k-2)$-dimensional volume with respect to the Dirichlet distribution of this slice, or
\begin{align*}
\Pr[v\cdot p=t]&\propto\int_{\Delta_k'\cap\{s_l=t\}}s_1^{\alpha_1-1}(s_2-s_1)^{\alpha_2-1}\dotsm(1-s_{k-1})^{\alpha_k-1}\,ds_1\dotsm ds_{l-1}ds_{l+1}\dotsm ds_{k-1}\\
&=\int_{0\le s_1\le\dotsb\le s_{l-1}\le t}s_1^{\alpha_1-1}\dotsm(t-s_{l-1})^{\alpha_l-1}\,ds_1\dotsm ds_{l-1}\times\\
&\qquad\int_{t\le s_{l+1}\le\dotsb\le s_{k-1}\le 1}(s_{l+1}-t)^{\alpha_{l+1}-1}\dotsm(1-s_{k-1})^{\alpha_k-1}ds_{l+1}\dotsm ds_{k-1}\\
&=\int_{0\le s_1'\le\dotsb\le s_{l-1}'\le1}(ts_1')^{\alpha_1-1}\dotsm(t-ts_{l-1}')^{\alpha_l-1}\,d(ts_1')\dotsm d(ts_{l-1}')\times\\
&\qquad\int_{0\le s_{l+1}'\le\dotsb\le s_{k-1}'\le1}((1-t)s_{l+1}')^{\alpha_{l+1}-1}\dotsm((1-t)(1-s_k'))^{\alpha_k-1}\,d((1-t)s_{l+1}')\dotsm d((1-t)s_k')\\
&\propto t^{\alpha_1+\dotsb+\alpha_l-1}(1-t)^{\alpha_{l+1}+\dotsb+\alpha_k-1},
\end{align*}
where we have substituted $s_i=ts_i'$ for $i\le l$ and $s_j=(1-t)s_j'+t$ for $j>l$. Pulling out the factors of $t$ and $1-t$, the remainder no longer depends on $t$. After normalizing, this is exactly the $\Beta(\alpha_1+\dotsb+\alpha_l,\alpha_{l+1}+\dotsb+\alpha_k)$ distribution. Therefore, by Theorem \ref{betasub}, this distribution is $\dfrac1{2(\alpha_1+\dotsb+\alpha_k+1)}$-subgaussian, as desired.\end{proof}

\section{Discussion}\label{disc}

These results can be extended in multiple directions. First, we examine the core technique, and then we examine other potential conjugate priors.

\subsection{Subgaussianity from Raw Moments}\label{srm}

The proof technique used in Theorem \ref{betasub} is perhaps the most generally useful contribution of this paper. Most proofs that a random variable is subgaussian involve showing a bound on its centered moments like in Proposition \ref{subgaussianproperties}.3. However, this result only used the \emph{raw} moments. In fact, this is all we need:
\begin{lem}If $X$ is a random variable with positive raw moments satisfying
\begin{equation}
\frac{\E[X^{j+2}]}{\E[X^j]}\le\E[X]^2+(j+1)\sigma^2,
\end{equation}
for every nonnegative integer $j$, then for all $\lambda>0$, $\E[e^{\lambda(X-\E[X])}]\le e^{\lambda^2\sigma^2/2}$.\label{momentbound}\end{lem}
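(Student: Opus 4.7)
My plan is to strip down the probabilistic proof of Proposition~\ref{weakbetasub}: the only place the beta structure entered was the ratio bound of Lemma~\ref{betatechnical}, and replacing that bound with the hypothesis of the present lemma lets the same argument go through almost verbatim. Concretely, I rewrite the goal as
\[\E[\exp(\lambda X)]\le\exp\Bigl(\lambda\E[X]+\tfrac12\lambda^2\sigma^2\Bigr),\]
expand both sides as power series in $\lambda$ around $0$, and compare coefficients of $\lambda^m$ one exponent at a time. Since $X$ has positive raw moments and $\E[X]\ge0$, every coefficient on either side is nonnegative, so termwise domination in $\lambda^m$ will yield the desired MGF bound for all $\lambda>0$ upon summation (and incidentally establish existence of the MGF by monotone convergence).

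For an even exponent $m=2k$, I telescope
\[\E[X^{2k}]=\prod_{l=0}^{k-1}\frac{\E[X^{2l+2}]}{\E[X^{2l}]},\]
with the convention $\E[X^0]=1$; positivity of the raw moments is what lets the ratios be well-defined and the hypothesis chained multiplicatively. Applying the hypothesis with $j=2l$ replaces each factor by $\E[X]^2+(2l+1)\sigma^2$. I then expand the resulting product and group by the number $l$ of $\E[X]^2$ factors chosen, upper-bounding the sum of the remaining $\sigma^2$-coefficient products by $\binom{k}{l}$ times the largest one, $(2k-1)(2k-3)\cdots(2l+1)$. The identity
\[\binom{k}{l}\cdot(2k-1)(2k-3)\cdots(2l+1)=\frac{(2k)!}{(2l)!\,(k-l)!\,2^{k-l}},\]
which follows from $(2k-1)!!=(2k)!/(2^k k!)$, is exactly what turns this bound into the coefficient of $\lambda^{2k}$ in $\exp(\lambda\E[X]+\lambda^2\sigma^2/2)$ (obtained by Cauchy-multiplying the two exponential series).

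The odd case $m=2k+1$ proceeds in parallel: I pull out one factor of $\E[X]$ and telescope starting from $\E[X^3]/\E[X]$,
\[\E[X^{2k+1}]=\E[X]\prod_{l=0}^{k-1}\frac{\E[X^{2l+3}]}{\E[X^{2l+1}]},\]
now applying the hypothesis with $j=2l+1$ to bound each ratio by $\E[X]^2+(2l+2)\sigma^2$. The only subtlety is that the resulting largest coefficient product $(2l+2)(2l+4)\cdots(2k)$ does not quite produce the right factorial; exactly as in the beta proof, I fix this by further enlarging each numerator $2m$ to $2m+1$ (a valid upper bound) before simplifying, after which the analogous identity matches the coefficient on the right.

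The main obstacle is purely bookkeeping---remembering to apply the hypothesis at both parities of $j$ (even values for the even moments, odd values for the odd moments, which is why the lemma ranges $j$ over all nonnegative integers) and tracking the two combinatorial identities carefully. Conceptually, the point of the lemma is merely that nothing in the weak beta subgaussianity proof relied on anything beyond the ratio bound, so isolating that ratio bound as the standalone hypothesis immediately gives a general tool for proving (one-sided) subgaussianity directly from raw moments.
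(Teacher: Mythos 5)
Your proposal is correct and is essentially identical to the paper's own proof: the paper likewise ``mimics'' the argument for Proposition~\ref{weakbetasub}, telescoping the raw moments into ratios, applying the hypothesized ratio bound at even $j$ for even moments and odd $j$ for odd moments, and using the same $\binom{k}{l}$-times-largest-product bound and double-factorial identities (including the enlargement of the even numerators in the odd case) to match the coefficients of $\exp(\lambda\E[X]+\lambda^2\sigma^2/2)$. No gaps.
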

\begin{rmk}Note that for $j=0$, this condition says that $\Var[X]\le\sigma^2$.\end{rmk}
\begin{proof}We simply mimic the proof of Theorem \ref{betasub}. Since all moments are positive, we can apply the assumed inequality to each term of a telescoping product:
\begin{align*}
[\lambda^{2k}]\E[\exp(\lambda X)]&=\frac1{(2k)!}\E[X^{2k}]=\frac1{(2k)!}\frac{\E[X^2]}{\E[X^0]}\cdot\frac{\E[X^4]}{\E[X^2]}\dotsm\frac{\E[X^{2k}]}{\E[X^{2k-2}]}\\
&\le\frac1{(2k)!}(\E[X]^2+\sigma^2)(\E[X]^2+3\sigma^2)\dotsm(\E[X]^2+(2k-1)\sigma^2)\\
&\le\frac1{(2k)!}\sum_{l=0}^k\E[X]^{2l}\sigma^{2(k-l)}\binom kl(2k-1)(2k-3)\dotsm(2l+1)\\
&=\frac1{(2k)!}\sum_{l=0}^k\E[X]^{2l}(\sigma^2/2)^{k-l}\frac{(2k)!}{(2l)!(k-l)!}\\
&=\sum_{l=0}^k\frac1{(2l)!}\E[X]^{2l}\frac1{(k-l)!}(\sigma^2/2)^{k-l}\\
&=[\lambda^{2k}]\exp(\lambda\E[X]+\lambda^2\sigma^2/2).
\end{align*}
The odd terms are also the same:
\begin{align*}
[\lambda^{2k+1}]\E[\exp(\lambda X)]&=\frac1{(2k+1)!}\E[X^{2k+1}]=\frac1{(2k+1)!}\E[X]\frac{\E[X^3]}{\E[X]}\dotsm\frac{\E[X^{2k+1}]}{\E[X^{2k-1}]}\\
&\le\frac1{(2k+1)!}\E[X](\E[X]^2+2\sigma^2)\dotsm((\E[X]^2+2k\sigma^2)\\
&\le\frac1{(2k+1)!}\sum_{l=0}^k\E[X]^{2l+1}\sigma^{2(k-l)}\binom kl(2k+1)(2k-1)\dotsm(2l+3)\\
&=\frac1{(2k+1)!}\sum_{l=0}^k\E[X]^{2l+1}(\sigma^2/2)^{k-l}\frac{(2k+1)!}{(2l+1)!(k-l)!}\\
&=\sum_{l=0}^k\frac1{(2l+1)!}\E[X]^{2l+1}\frac1{(k-l)!}(\sigma^2/2)^{k-l}\\
&=[\lambda^{2k+1}]\exp(\lambda\E[X]+\lambda^2\sigma^2/2).
\end{align*}
Summing all of the terms, which are positive since $\lambda>0$, therefore yields the desired inequality.\end{proof}
For ease of discussion, this definition will be helpful:
\begin{defn}A random variable $X$ is $\sigma^2$-\emph{upper subgaussian} if for all $\lambda>0$, $\E[e^{\lambda X}]\le e^{\lambda^2\sigma^2/2}$. Similarly, $X$ is $\sigma^2$-\emph{lower subgaussian} if this bound holds for all $\lambda<0$.\end{defn}
\begin{prop}If $X$ is $\sigma^2$-upper (resp. lower) subgaussian, then $\Pr[X>\eps]\le e^{-\eps^2/(2\sigma^2)}$ (resp. $\Pr[X<-\eps]\le e^{-\eps^2/(2\sigma^2)}$) for all $\eps>0$.\end{prop}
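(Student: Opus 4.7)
The plan is to run the standard Chernoff bound argument, which in fact duplicates one half of the computation already carried out inside the proof of Proposition \ref{subgaussiansufficiency}; the only new observation is that each one-sided tail bound uses only the one-sided moment-generating-function hypothesis, i.e. the bound on $\E[e^{\lambda X}]$ for $\lambda$ of the appropriate sign.

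For the upper case, I would fix $\eps>0$ and pick an arbitrary $\lambda>0$, then apply Markov's inequality to the nonnegative random variable $e^{\lambda X}$:
\[\Pr[X>\eps]=\Pr[e^{\lambda X}>e^{\lambda\eps}]\le e^{-\lambda\eps}\,\E[e^{\lambda X}]\le\exp\!\left(\frac{\lambda^2\sigma^2}{2}-\lambda\eps\right),\]
where the final step uses $\sigma^2$-upper subgaussianity, which by definition supplies the MGF bound precisely for $\lambda>0$. Optimizing the exponent over $\lambda>0$ by setting $\lambda=\eps/\sigma^2$ gives the stated bound $\exp(-\eps^2/(2\sigma^2))$.

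For the lower case, I would run the mirror-image argument with $\lambda<0$. Then $e^{\lambda X}$ is still nonnegative and $\{X<-\eps\}=\{e^{\lambda X}>e^{-\lambda\eps}\}$, so
\[\Pr[X<-\eps]\le e^{\lambda\eps}\,\E[e^{\lambda X}]\le\exp\!\left(\frac{\lambda^2\sigma^2}{2}+\lambda\eps\right),\]
using $\sigma^2$-lower subgaussianity, which is exactly the MGF bound for $\lambda<0$. Choosing $\lambda=-\eps/\sigma^2<0$ again yields $\exp(-\eps^2/(2\sigma^2))$. Equivalently one could apply the upper-subgaussian argument to $-X$ after observing that $X$ is $\sigma^2$-lower subgaussian iff $-X$ is $\sigma^2$-upper subgaussian.

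There is no real obstacle: the content of the proposition is purely that splitting the usual two-sided subgaussian tail bound into its upper and lower halves requires only the corresponding half of the MGF hypothesis, and the optimization over $\lambda$ is the textbook one. The most I would want to flag in writing is to check the sign conventions carefully in the lower case so that $\lambda<0$ is used consistently with the direction of Markov's inequality.
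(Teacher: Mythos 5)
Your proof is correct and is essentially identical to the paper's: both apply Markov's inequality to $e^{\lambda X}$, use the one-sided MGF bound for $\lambda$ of the appropriate sign, and optimize at $\lambda=\pm\eps/\sigma^2$. The paper simply states the upper case and notes the lower case is identical, whereas you spell out the sign bookkeeping explicitly.
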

\begin{proof}This is the standard application of Markov's inequality:
\[\Pr[X\ge\eps]\le e^{-\lambda\eps}\E[e^{\lambda X}]\le e^{\lambda^2\sigma^2/2-\lambda\eps},\]
for any $\lambda>0$. Setting $\lambda=\eps/\sigma^2$ yields the desired result. The lower subgaussian bound is identical.\end{proof}
That these upper bounds on the moments of $X$ only imply bounds on the upper tail of the distribution of $X$ is perhaps not surprising, since large moments distinguish the upper tail much more than they do the lower tail.

We can see that this method works well whenever we can bound ratios of \emph{raw moments} of an uncentered random variable in this fashion. For a second example, consider the Chi distribution, a distribution on $[0,\infty)$, which measures the Euclidean norm of a standard $k$-dimensional Gaussian. Its moments are given by (see \cite{weisstein2003chi})
\[\E[X^j]=2^{j/2}\frac{\Gamma((k+j)/2)}{\Gamma(k/2)}\implies\E[X]=\sqrt2\frac{\Gamma((k+1)/2)}{\Gamma(k/2)};\quad\E[X^2]=k;\quad\E[X^{j+2}]=(k+j)\E[X^j].\]
\begin{prop}The Chi distribution $\chi_k$ is $1$-upper subgaussian.\end{prop}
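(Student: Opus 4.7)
The plan is to apply Lemma \ref{momentbound} with $\sigma^2=1$. The Chi distribution is supported on $(0,\infty)$, so all raw moments are positive. Substituting the given recurrence $\E[X^{j+2}]=(k+j)\E[X^j]$ into the hypothesis, one obtains $\E[X^{j+2}]/\E[X^j] = k+j$, so the required inequality becomes $k+j \le \E[X]^2 + (j+1)$. Remarkably, this is \emph{independent of $j$} and collapses to the single claim $\E[X]^2 \ge k-1$. Since $\E[X^2]=k$, this is equivalent to the familiar variance bound $\Var[\chi_k] \le 1$, which serves as a sanity check that we are asking for the right thing.

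All that remains is to verify this one inequality. Written in terms of Gamma functions, we need $\Gamma((k+1)/2)^2/\Gamma(k/2)^2 \ge (k-1)/2$. I would deduce this from Wendel's inequality, which states that $\Gamma(x+s)/\Gamma(x) \ge x^s(x/(x+s))^{1-s}$ for $x>0$ and $0<s<1$. Applied with $x=k/2$ and $s=1/2$, this gives $\Gamma((k+1)/2)/\Gamma(k/2) \ge \sqrt{k^2/(2(k+1))}$, so $\E[X]^2 \ge k^2/(k+1)$. The elementary inequality $k^2/(k+1) \ge k-1$ (equivalent to $k^2 \ge k^2-1$) then closes the argument.

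The main obstacle is that the bound $\E[X]^2 \ge k-1$ is asymptotically tight --- both sides grow as $k - 1 + O(1/k)$ --- so the Gamma-ratio estimate must be quite sharp; a crude Stirling bound would either lose constants or require fussy error estimates valid for all $k \ge 1$. Wendel's inequality is precisely sharp enough and handles the full range uniformly. An alternative would be to appeal to Gaussian concentration (the map $z \mapsto \|z\|$ from $\mathbb{R}^k$ to $\mathbb{R}$ is $1$-Lipschitz, hence $1$-subgaussian about its mean when evaluated at a standard Gaussian), which is far slicker but bypasses the raw-moment machinery; the virtue of the present framing is precisely to illustrate how cleanly the criterion from Lemma \ref{momentbound} can be checked when the raw moments satisfy a simple multiplicative recurrence.
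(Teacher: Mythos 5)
Your proposal is correct and follows essentially the same route as the paper: both apply Lemma \ref{momentbound} with $\sigma^2=1$, use the recurrence $\E[X^{j+2}]=(k+j)\E[X^j]$ to collapse the hypothesis to the single inequality $\E[X]^2\ge k-1$, and then verify that inequality by a sharp estimate on the Gamma ratio. The only divergence is in that last step: the paper uses log-convexity of $\Gamma$ to write $\E[X]^2=2\Gamma((k+1)/2)^2/\Gamma(k/2)^2>2\Gamma((k+1)/2)/\Gamma((k-1)/2)=k-1$ directly, whereas you invoke Wendel's inequality to get the marginally stronger $\E[X]^2\ge k^2/(k+1)$; both estimates are sharp enough and the argument goes through either way.
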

\begin{proof}To apply Lemma \ref{momentbound}, we already have $\E[X^2]=k$ and $\frac{\E[X^{j+2}]}{\E[X^j]}=k+j=(k-1)+(j+1)(1)$, so it only remains to show that $\E[X]^2\ge k-1$. Because the Gamma function is log-convex, $\frac{\Gamma((k+1)/2)}{\Gamma(k/2)}>\frac{\Gamma(k/2)}{\Gamma((k-1)/2)}$, so
\[\E[X]^2=2\frac{\Gamma((k+1)/2)^2}{\Gamma(k/2)^2}>2\frac{\Gamma((k+1)/2)}{\Gamma(k/2)}\cdot\frac{\Gamma(k/2)}{\Gamma((k-1)/2)}=2\frac{\Gamma((k+1)/2)}{\Gamma((k-1)/2)}=k-1,\]
as desired.\end{proof}
Of course, this is not particularly surprising, given that the probability density function of the $\chi$ distribution is proportional to $x^{k-1}e^{-x^2/2}$. Still, this example illustrates the close relationship between bounds on such ratios of moments and having a subgaussian upper tail.

\subsection{Related Problems in Bayesian Data Analysis}\label{rpbda}

The family of beta distributions have a key monotone property as the posteriors for Bernoulli trials that made this approach possible: When the posterior updates on an additional data point, the positive parameter $\alpha+\beta$ increases by 1. This monotonicity means that only some beta distributions in the family can be the posterior of a curator who has already received $n$ data points, those with $\alpha+\beta>n$.

Other conjugate priors also feature similar monotonicity features:
\begin{itemize}
\item For the $m$-binomial distribution with conjugate prior $\Beta(\alpha,\beta)$, $\alpha+\beta$ increases by $m$ with each binomial sample.
\item For the $m$-multinomial distribution with conjugate prior $\Dir(\alpha_1,\dotsc,\alpha_k)$, $\alpha_1+\dotsc+\alpha_k$ increases with $m$ with each multinomial sample.
\item For the Poisson distribution with conjugate prior $\Gamma(\alpha,\beta)$, the parameter $\beta$ increases by 1 with each Poisson sample.
\item For the geometric distribution with conjugate prior $\Beta(\alpha,\beta)$, $\alpha$ increases by $m$ with each geometric sample.
\end{itemize}

Therefore, by this general approach in Proposition \ref{subgaussiansufficiency}, in any of these conjugate prior models for adaptive data analysis, the posterior mean curator is accurate when $n=O\left(\frac1{\eps^2}\log\frac1\delta\right)$ if each of the projections of these distributions onto queries has the appropriate subgaussian concentration. We formulate the necessary conditions into a series of conjectures (using Proposition \ref{countingsufficiency}):
\begin{conj}For any $S\subset\{0,1,\dotsc,m\}$, the random variable
\[\sum_{k\in S}\binom mkp^k(1-p)^{m-k},\quad p\sim\Beta(\alpha,\beta)\]
is $O\left(\frac m{\alpha+\beta}\right)$-subgaussian.\label{betabinomialconj}\end{conj}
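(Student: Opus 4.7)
My plan is to try to extend the learning-theoretic proof of Theorem \ref{betasub} to this more general setting. Writing $F(p):=\sum_{k\in S}\binom{m}{k}p^k(1-p)^{m-k}$, the analogous Doob martingale is $M_k:=\E_{p\mid\F_k}[F(p)]$, where $\F_k$ is the $\sigma$-algebra generated by $k$ Bernoulli draws from an extended P\'olya-urn process with de Finetti limit $p$; then $M_0=\E[F(p)]$ and $M_\infty=F(p)$ almost surely. Each increment $M_k-M_{k-1}$ is a centered binary random variable with range $|D_k|:=|\mu(\alpha_{k-1}'+1,\beta_{k-1}')-\mu(\alpha_{k-1}',\beta_{k-1}'+1)|$, where $\mu(a,b):=\E_{\Beta(a,b)}[F]$, and the same computation as in Theorem \ref{betasub} yields conditional variance proxy at most $D_k^2/4$. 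Unfortunately the naive bound $|D_k|\le m/(\alpha+\beta+k-1)$ coming from $F$'s $m$-Lipschitz constant gives only $\sum_k D_k^2/4=O(m^2/(\alpha+\beta))$, which is off by a factor of $m$ from the conjecture.

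To recover the tight factor the key observation is the score-function identity for the binomial family,
\[F'(p)=\frac{\operatorname{Cov}_{Y\sim\operatorname{Bin}(m,p)}(\mathbf{1}_{Y\in S},Y)}{p(1-p)},\]
which by Cauchy--Schwarz implies $p(1-p)F'(p)^2\le mF(p)(1-F(p))$. So while $|F'(p)|$ can be as large as $m$ pointwise, the weighted gradient $p(1-p)F'(p)^2$ is globally at most $m/4$. Combined with the Poincar\'e inequality for the Jacobi diffusion generator $Lf=p(1-p)f''+(\alpha(1-p)-\beta p)f'$ --- for which $\Beta(\alpha,\beta)$ is reversible with spectral gap $\alpha+\beta$ --- this immediately gives $\operatorname{Var}_{\Beta(\alpha,\beta)}(F(p))\le m/(4(\alpha+\beta))$, the conjectured scaling at the level of the variance.

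To upgrade variance to the required subgaussian moment generating function bound I would invoke the log-Sobolev inequality for $\Beta(\alpha,\beta)$, which holds with constant $O(1/(\alpha+\beta))$ when $\alpha,\beta\ge 1$ via the Bakry--\'Emery $CD(\rho,\infty)$ condition for the Jacobi semigroup. Herbst's argument applied to $\lambda F$, using the same weighted gradient bound, then produces a differential inequality for $H(\lambda):=\log\E[e^{\lambda(F-\E F)}]$ of the form $\lambda H'(\lambda)-H(\lambda)\le Cm\lambda^2/(\alpha+\beta)$, which integrates to $H(\lambda)=O(\lambda^2m/(\alpha+\beta))$, i.e., precisely the conjectured subgaussianity.

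The main obstacle I anticipate lies in the regime $\min(\alpha,\beta)<1$, where the Bakry--\'Emery curvature of the Jacobi operator degenerates near $\{0,1\}$ and a log-Sobolev constant of the correct order is no longer immediate from curvature arguments; one would need more delicate bounds in the spirit of Bobkov--G\"otze. A potentially cleaner alternative that sidesteps functional inequalities entirely is to apply the raw-moment criterion of Lemma \ref{momentbound} directly to $F(p)$: expand $\E[F(p)^n]$ as a sum over $\vec k\in S^n$ of beta-moment products via \eqref{betamoments}, then verify the ratio inequality $\E[F^{j+2}]/\E[F^j]\le(\E F)^2+(j+1)\sigma^2$ with $\sigma^2=O(m/(\alpha+\beta))$ by an argument parallel to Lemma \ref{betatechnical}. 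This avoids LSI but appears combinatorially substantial, since each $F(p)^n$ expands into $|S|^n$ polynomial terms that must be controlled uniformly in $n$.
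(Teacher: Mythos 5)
First, a point of calibration: the paper does not prove this statement. Conjecture \ref{betabinomialconj} is left open, and the end of Section \ref{rpbda} explicitly says that ``new methods will be required to prove any of these conjectures.'' So your proposal cannot be compared to a proof in the paper; it has to stand on its own, and on its own terms it is a genuinely promising plan containing one complete new result and one real gap, rather than a proof of the conjecture. The solid part: writing $F(p)=\Pr_{Y\sim\operatorname{Bin}(m,p)}[Y\in S]$, the score identity $F'(p)=\operatorname{Cov}(\mathbf 1_{Y\in S},Y)/(p(1-p))$ is correct, and Cauchy--Schwarz with $\Var(Y)=mp(1-p)$ does give the uniform carr\'e-du-champ bound $p(1-p)F'(p)^2\le mF(p)(1-F(p))\le m/4$. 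Since $p-\E p$ is an eigenfunction of the Jacobi generator with eigenvalue $-(\alpha+\beta)$, and the full spectrum $n(n+\alpha+\beta-1)$ shows this is the spectral gap for every $\alpha,\beta>0$, the Poincar\'e step $\Var(F)\le m/(4(\alpha+\beta))$ is complete, holds for every $S$ and all positive parameters, and already captures the conjectured scaling at the level of the variance --- something the paper does not establish. Your diagnosis that the naive martingale/Lipschitz argument necessarily loses a factor of $m$ is also accurate; the covariance identity is the right mechanism for recovering it.

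The gap is exactly where you locate it, and it is not cosmetic. The one-dimensional $\Gamma_2$ computation for $Lf=p(1-p)f''+(\alpha(1-p)-\beta p)f'$ reduces $CD(\rho,\infty)$ to the nonnegativity of $\rho x^2+\bigl(\tfrac{\beta-\alpha}{2}-\rho\bigr)x+\tfrac{2\alpha-1}{4}$ on $[0,1]$; evaluating at the endpoints forces $\alpha,\beta\ge\tfrac12$ for any $\rho>0$, and $\rho=\Omega(\alpha+\beta)$ requires a bit more (it does hold for $\alpha,\beta\ge1$, where one can take $\rho\ge(\alpha+\beta)/2$). So the LSI-plus-Herbst step yields the conjecture with an explicit constant of roughly $m/(2(\alpha+\beta))$ in that regime, but says nothing when $\min(\alpha,\beta)<\tfrac12$ --- a regime the conjecture must cover, since its $m=1$ instance is Theorem \ref{betasub}, which the paper proves for all $\alpha,\beta>0$, and since in the binomial-sampling application a posterior can retain a small parameter indefinitely if one category is never observed. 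The Bobkov--G\"otze/Muckenhoupt route you mention is plausible but is precisely the unproved step; note also that uniform subgaussianity of all functions with bounded carr\'e du champ is strictly weaker than LSI, so the small-parameter case might be attackable by a weaker functional inequality (or by Lemma \ref{momentbound}, whose combinatorial difficulty you correctly anticipate). Two smaller requests if you pursue this: verify rather than assert the curvature constant (the computation above is elementary), and state clearly that what you currently have is a full proof of the variance bound plus a proof of the conjecture restricted to $\alpha,\beta\ge1$.
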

\begin{conj}For any subset $S\subset\{0,1,\dotsc\}$, the random variable
\[\sum_{k\in S}p(1-p)^k,\quad p\sim\Beta(\alpha,\beta)\]
is $O\left(\frac1\alpha\right)$-subgaussian.\label{geometricconj}\end{conj}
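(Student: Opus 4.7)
The plan is to adapt the martingale (learning-theoretic) argument from the proof of Theorem \ref{betasub}, exploiting the fact that each geometric sample deterministically increments $\alpha$ by one. Place a $\Beta(\alpha,\beta)$ prior on $p$, draw $K_1,K_2,\dots$ i.i.d.\ from $\mathrm{Geom}(p)$, and let $\F_n=\sigma(K_1,\dots,K_n)$. After $n$ samples the posterior is $\Beta(\alpha+n,\beta+\sum_{i=1}^n K_i)$. Setting $g(p)=\sum_{k\in S}p(1-p)^k$, define the martingale $X_n=\E[g(p)\mid\F_n]$. Since $p$ is learned in the limit, $X_\infty=g(p)$ almost surely, so $X_\infty-X_0=g(p)-\E[g(p)]$ is exactly the centered quantity whose subgaussian norm we must bound.

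The three steps are then: (i) given the current posterior $\Beta(\alpha_{k-1},\beta_{k-1})$ with $\alpha_{k-1}=\alpha+k-1$, write out the posterior-predictive distribution of $K_k$, a beta-geometric with $\Pr[K_k=j\mid\F_{k-1}]=\alpha_{k-1}(\beta_{k-1})_j/(\alpha_{k-1}+\beta_{k-1})_{j+1}$, and express the jump $X_k(j)-X_{k-1}$ in closed form as a difference of rising-factorial sums; (ii) show that $\tau^2(X_k-X_{k-1}\mid\F_{k-1})=O(1/\alpha_{k-1}^2)$; (iii) apply Azuma's inequality to obtain variance proxy $\sum_{k\geq 1}O(1/(\alpha+k-1)^2)=O(1/\alpha)$, matching the conjectured rate.

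The main obstacle is step (ii). Unlike the binary next-observation in the proof of Theorem \ref{betasub}, $K_k$ is unbounded and its beta-geometric distribution has only polynomial tails of exponent $\alpha_{k-1}$, which are very heavy when $\alpha$ is small. Moreover, as $j\to\infty$ the posterior concentrates near $p=0$ and $X_k(j)\to 0$, so the range of the increment can be as large as $X_{k-1}=\Theta(1)$; a naive range-based Hoeffding bound is therefore far too lossy. A plausible refinement is to split $j$ into a typical regime, where a direct derivative-style bound gives $|X_k(j)-X_{k-1}|=O(1/\alpha_{k-1})$ uniformly, and a heavy tail, where the probability is small enough that a Bernstein-type mixed-tail bound still yields $O(1/\alpha_{k-1}^2)$ subgaussianity.

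As a backup, one could bypass the martingale and apply Lemma \ref{momentbound} directly: expand $g(p)^j$ as a sum over $(k_1,\dots,k_j)\in S^j$ of terms $(\alpha)_j(\beta)_{k_1+\cdots+k_j}/(\alpha+\beta)_{j+k_1+\cdots+k_j}$ and seek a combinatorial proof of the raw-moment ratio inequality $\E[g(p)^{j+2}]/\E[g(p)^j]\leq\E[g(p)]^2+(j+1)\cdot O(1/\alpha)$. The pairwise bound of Lemma \ref{betatechnical} does not aggregate cleanly across distinct cross-terms in this expansion, so this alternative likewise requires a delicate combinatorial step; I would expect it to be harder than the martingale route, since it does not directly exploit the monotone growth of $\alpha$ with each geometric sample that makes the conjugate-prior martingale so natural here.
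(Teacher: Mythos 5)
This statement is stated in the paper as a \emph{conjecture}, not a theorem: the paper explicitly says that ``both the raw moments and the step sizes of most of these transformed random variables are not simple to compute, so new methods will be required to prove any of these conjectures.'' So there is no proof in the paper to compare against, and your proposal does not supply one either. Your plan is the natural one --- it is exactly the martingale/Azuma strategy of the learning-theoretic proof of Theorem \ref{betasub} and of Lemma \ref{pmchange}.1, and your bookkeeping (posterior $\Beta(\alpha+n,\beta+\sum K_i)$, the beta-geometric predictive $\alpha'(\beta')_j/(\alpha'+\beta')_{j+1}$, and the telescoping sum $\sum_k O(1/(\alpha+k-1)^2)=O(1/\alpha)$) is all correct. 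But the entire content of the conjecture is concentrated in your step (ii), the claim that the conditional increment $X_k-X_{k-1}$ is $O(1/\alpha_{k-1}^2)$-subgaussian, and you offer only a ``plausible refinement'' (typical regime plus Bernstein-type tail) rather than an argument. This is precisely where the analogy with Theorem \ref{betasub} breaks: there the next observation is binary and the increment is a two-point random variable whose exact subgaussian constant is known from \cite{buldygin2013sub}, whereas here the increment is supported on countably many values with polynomially heavy predictive tails, and no analogue of that exact computation is available. A sketch that names the obstacle is not a proof of it.

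One concrete inaccuracy worth flagging: your claim that $X_k(j)\to 0$ as $j\to\infty$ because the posterior concentrates near $p=0$ is false for general infinite $S$. For example, if $S$ is the set of even integers then $g(p)=\sum_{k\text{ even}}p(1-p)^k=1/(2-p)$, which tends to $1/2$ as $p\to 0$; and if $S=\{0,1,2,\dotsc\}$ then $g\equiv 1$. This does not rescue step (ii) --- the increment can still have range $\Theta(1)$ for suitable $S$ --- but it shows that the geometry of $g$ near $p=0$ is more delicate than your sketch assumes, and any genuine proof of step (ii) will have to handle the dependence on $S$ with care. Your backup route via Lemma \ref{momentbound} runs into the same wall the paper identifies: the raw moments of $g(p)$ involve sums over tuples in $S^j$ that do not telescope the way the single-ratio bound of Lemma \ref{betatechnical} does. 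In short, the approach is reasonable and consistent with the paper's own framework, but the statement remains open both in the paper and in your proposal.
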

\begin{conj}For any subset $S\subset\{(x_1,\dotsc,x_k)\in\Z^n:x_i\ge0,x_1+\dotsb+x_k=m\}$, the random variable
\[\sum_{(x_1,\dotsc,x_k)\in S}\frac{m!}{x_1!\dotsm x_k!}p_1^{x_1}\dotsm p_k^{x_k},\quad(p_1,\dotsc,p_k)\sim\Dir(\alpha_1,\dotsc,\alpha_k)\]
is $O\left(\frac m{\alpha_1+\dotsm+\alpha_k}\right)$-subgaussian.\label{multinomialconj}\end{conj}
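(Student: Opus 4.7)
My plan is to follow the learning-theoretic/martingale approach of Theorem~\ref{betasub}. Let $q(\vec p) = \sum_{\vec x\in S}\binom{m}{\vec x}\prod_i p_i^{x_i} \in [0,1]$ denote the query, let $N = \sum_i\alpha_i$, and consider generating an infinite sequence $X_1,X_2,\dotsc$ of individual categorical samples from the true $\vec p \sim \Dir(\vec\alpha)$. Setting $\F_t = \sigma(X_1,\dotsc,X_t)$ and $Y_t = \E[q(\vec p)\mid\F_t]$, the process $\{Y_t\}$ is a $[0,1]$-valued martingale with $Y_0 = \E_{\Dir(\vec\alpha)}q(\vec p)$ and $Y_\infty = q(\vec p)$ almost surely. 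It therefore suffices to bound the sum of subgaussian variance proxies of the increments $Y_t - Y_{t-1}$ by $O(m/N)$ and apply Azuma's inequality together with Proposition~\ref{countingsufficiency}.

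The key calculation is a closed form for the martingale increment. Writing $\vec\alpha' = \vec\alpha + \vec c_{t-1}$ and $N' = N+t-1$ for the running pseudo-counts, the Dirichlet moment identity $\E_{\Dir(\vec\alpha')}\prod_i p_i^{x_i} = \prod_i(\alpha_i')^{(x_i)}/(N')^{(m)}$ (rising factorials) combined with the ratio $(\alpha_j'+1)^{(x_j)}/(\alpha_j')^{(x_j)} = (\alpha_j'+x_j)/\alpha_j'$ yields, upon observing category $j$ (probability $\alpha_j'/N'$),
\[Y_t - Y_{t-1} \;=\; \frac{N'}{(N'+m)\,\alpha_j'}\cdot\mathrm{Cov}\!\left(\tilde Y_j,\,\mathbf{1}[\tilde{\vec Y}\in S]\right),\]
where $\tilde{\vec Y}$ is a hypothetical fresh Dirichlet-multinomial sample of size $m$ drawn from the current posterior $\Dir(\vec\alpha')$. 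This expression is manifestly mean-zero under $\Pr[j] = \alpha_j'/N'$ because $\sum_j \mathrm{Cov}(\tilde Y_j,\,\cdot\,) = \mathrm{Cov}(m,\,\cdot\,) = 0$, and Cauchy--Schwarz combined with the Dirichlet-multinomial variance bound $\mathrm{Var}(\tilde Y_j) \le 2m\alpha_j'/N'$ and $\mathrm{Var}(\mathbf{1}[\tilde{\vec Y}\in S])\le\tfrac14$ gives the conditional second moment $\E[(Y_t-Y_{t-1})^2 \mid \F_{t-1}] = O(m/(N')^2)$, which sums to exactly $O(m/N)$.

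The main obstacle is upgrading this conditional-second-moment bound to a subgaussian variance-proxy bound with the same $O(m/(N')^2)$ scaling per step. Since each increment takes up to $k$ distinct values, a bare Hoeffding argument based on the $L^\infty$ bound $|Y_t - Y_{t-1}| \le m/(N'+m)$ only gives $\tau^2(Y_t - Y_{t-1}\mid\F_{t-1}) = O(m^2/(N')^2)$, a factor of $m$ too weak, yielding only an $O(m^2/N)$ bound overall. To close this gap I would use the exchangeability identity $\mathrm{Cov}(\tilde Y_j,\mathbf{1}[\tilde{\vec Y}\in S]) = m\cdot\mathrm{Cov}(\mathbf{1}[\tilde X_1 = j],\mathbf{1}[\tilde{\vec Y}\in S])$ to rewrite each step as a covariance of two indicators, and then invoke a Bernstein--Freedman-type martingale inequality upgraded in the subgaussian regime, or equivalently control the conditional MGF directly using the small support size of each increment. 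An alternative route, avoiding the martingale analysis entirely, is to apply Lemma~\ref{momentbound} to the raw moments of $q(\vec p)$: these factor as multi-index sums of rising factorials of $\vec\alpha$ divided by $(N)^{(jm)}$, and the required ratio bound $\E[q(\vec p)^{j+2}]/\E[q(\vec p)^j] \le \E[q(\vec p)]^2 + O((j+1)m/N)$ should follow from a multi-index generalization of Lemma~\ref{betatechnical}; I view verifying this combinatorial identity as the heart of the problem.
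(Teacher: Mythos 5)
This statement is Conjecture~\ref{multinomialconj}; the paper offers no proof of it and in fact states explicitly that ``new methods will be required to prove any of these conjectures,'' so there is no argument of the author's to measure yours against. What you have written is a proof plan with two self-acknowledged open steps, not a proof, and both steps contain genuine issues as stated. First, your conditional second-moment bound does not follow from the inequalities you invoke: writing $c_j=\mathrm{Cov}(\tilde Y_j,\mathbf{1}_S)$, the conditional second moment is $\frac{N'}{(N'+m)^2}\sum_j c_j^2/\alpha_j'$, and bounding each $c_j^2\le\Var(\tilde Y_j)\Var(\mathbf{1}_S)\le\frac{m\alpha_j'}{2N'}$ separately yields $O(mk/(N')^2)$ --- a factor of $k$ worse than the dimension-free $O(m/(N')^2)$ you assert, which would propagate to an unwanted $k$ in the final proxy. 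This is repairable: apply Cauchy--Schwarz once to $T:=\sum_j c_j^2/\alpha_j'=\mathrm{Cov}\bigl(\sum_j(c_j/\alpha_j')\tilde Y_j,\mathbf{1}_S\bigr)$ and use $\sum_j c_j=0$ to kill the mean term in the Dirichlet-multinomial variance of the linear combination; this gives the self-bounding inequality $T\le\frac12\sqrt{\frac{m(N'+m)}{N'(N'+1)}T}$, hence $T\le\frac{m(N'+m)}{4N'(N'+1)}$ and a genuinely dimension-free step bound. (Also note that $\Var(\tilde Y_j)\le 2m\alpha_j'/N'$ requires $m=O(N')$, which is harmless only because the conjecture is vacuous for $m\gtrsim N$, where any $[0,1]$-valued variable is $\tfrac14$-subgaussian.)

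Second, the step you correctly identify as the main obstacle --- upgrading conditional variances to a subgaussian variance proxy --- is left as a gesture toward an unspecified ``Bernstein--Freedman-type inequality upgraded in the subgaussian regime,'' and the alternative route through Lemma~\ref{momentbound} is not developed at all. You are actually closer than you suggest: from $|\mathrm{Cov}(\tilde Y_j,\mathbf{1}_S)|\le\E[\tilde Y_j]=m\alpha_j'/N'$ every increment is uniformly bounded by $m/(N'+m)\le m/(N+m)$, so with total predictable variation $V=O(m/N)$ and uniform increment bound $b=O(m/N)$, plain Freedman already gives $\Pr[|Y_\infty-Y_0|\ge\eps]\le 2\exp\bigl(-\eps^2/(2(V+b\eps/3))\bigr)\le 2\exp(-c\eps^2N/m)$ for all $\eps\le1$, which is the entire range since the variable lives in $[0,1]$; a two-sided subgaussian tail at scale $\sqrt{m/N}$ converts to MGF-subgaussianity with only a universal-constant loss, which the $O(\cdot)$ in the conjecture absorbs. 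If you carry out these two repairs carefully, your martingale route appears to settle the conjecture (in its $O(\cdot)$ form), which would go beyond what the paper establishes; as written, however, both load-bearing steps are incomplete.
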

\begin{conj}For any subset $S\subset\{0,1,\dotsc\}$, the random variable
\[\sum_{k\in S}\frac{\lambda^ke^{-\lambda}}{k!},\quad\lambda\sim\Gamma(\alpha,\beta)\]
is $O\left(\frac1\beta\right)$-subgaussian.\label{poissonconj}\end{conj}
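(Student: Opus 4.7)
The plan is to adapt the martingale/Azuma approach from the proof of Theorem \ref{betasub} to the Poisson--Gamma setting. Fix a subset $S\subset\{0,1,2,\dotsc\}$ and set $g(\lambda)=\sum_{k\in S}\lambda^ke^{-\lambda}/k!=\Pr[Y\in S\mid Y\sim\mathrm{Poisson}(\lambda)]$. Let $\F_0$ encode the $\Gamma(\alpha,\beta)$ prior on $\lambda$, let $\F_k$ encode the posterior after observing $k$ iid $\mathrm{Poisson}(\lambda)$ samples $X_1,\dotsc,X_k$, and set $Y_k=\E[g(\lambda)\mid\F_k]$. Then $\{Y_k\}$ is a bounded martingale, and since the Gamma posterior concentrates at $\lambda$ as $k\to\infty$, $Y_\infty=g(\lambda)$ almost surely. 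By Azuma's inequality, the subgaussian norm of $g(\lambda)-Y_0$ is bounded by $\sum_{k=1}^\infty\sigma_k^2$, where $(Y_k-Y_{k-1})\mid\F_{k-1}$ is uniformly $\sigma_k^2$-subgaussian. The target is $\sigma_k^2=O(1/(\beta+k-1)^2)$, which telescopes to $O(1/\beta)$ just as in the final calculation of the Theorem \ref{betasub} proof.

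The crux is the single-step analysis. Conditional on $\F_{k-1}$, the posterior is $\Gamma(\alpha',\beta')$ with $\beta'=\beta+k-1$; the next sample $X_k$ is marginally negative binomial, and $Y_k=T(X_k)$ where $T(x)=\E_{\Lambda\sim\Gamma(\alpha'+x,\beta'+1)}[g(\Lambda)]$. I would first establish a discrete Lipschitz bound $|T(x+1)-T(x)|\le C/(\beta'+1)$ by writing the difference as an integral of $g$ against a signed Gamma-difference density and exploiting $|g'(\lambda)|=|\Pr[Y+1\in S]-\Pr[Y\in S]|\le d_{\mathrm{TV}}(\mathrm{Poisson}(\lambda),\mathrm{Poisson}(\lambda)+1)=O(1/\sqrt\lambda)$, together with the $1/(\beta'+1)$ shift in posterior mean. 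Combined with $T\in[0,1]$, this controls the conditional variance of $T(X_k)$ by $O(1/(\beta')^2)$; to upgrade this variance bound to subgaussianity, I would apply the raw-moment criterion (Lemma \ref{momentbound}) to $T(X_k)$ as a bounded nonnegative random variable, checking the ratio bound $\E[T(X_k)^{j+2}]/\E[T(X_k)^j]\le\E[T(X_k)]^2+(j+1)\cdot O(1/(\beta')^2)$ using the Lipschitz and boundedness properties of $T$.

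The main obstacle is that the negative binomial is not itself subgaussian---its MGF diverges for arguments above $\log(\beta'+1)$---so one cannot simply pass subgaussianity through the Lipschitz map $T$ as in the Bernoulli case of Theorem \ref{betasub}, where the single-sample distribution was bounded and the Buldygin--Kozachenko binary formula applied directly. A cleaner alternative route bypasses the martingale entirely and applies Lemma \ref{momentbound} directly to $g(\lambda)$: the raw moments take the appealing form $\E[g(\lambda)^j]=\Pr[Y_1,\dotsc,Y_j\in S]$, where the $Y_i$ are jointly mixed-Poisson observations (each marginally negative binomial, dependent through the shared rate $\lambda$), and the conjecture reduces to proving the ratio bound $\E[g^{j+2}]/\E[g^j]\le(\E g)^2+(j+1)\cdot O(1/\beta)$ uniformly over $S$. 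Unlike the beta case, where the rising-factorial moment structure yielded the clean telescoping Lemma \ref{betatechnical}, the Gamma--Poisson moments mix $S$-dependent terms and no simple inductive structure is apparent. Proving this ratio bound---via an algebraic identity, a coupling between the $j$- and $(j+2)$-sample measures, or a functional inequality for the Gamma--Poisson semigroup---is the real technical obstacle I would expect to grind through.
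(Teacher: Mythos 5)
This statement is Conjecture \ref{poissonconj}; the paper does not prove it, and in fact explicitly states that ``new methods will be required to prove any of these conjectures'' because neither the raw moments nor the step sizes of the transformed variable are simple to compute. Your proposal is an honest research plan rather than a proof: both routes you describe (the Doob-martingale/Azuma route and the direct application of Lemma \ref{momentbound} to $g(\lambda)=\Pr[\mathrm{Poisson}(\lambda)\in S]$) end at an unresolved ``real technical obstacle,'' so the conjecture remains open under your proposal exactly as it does in the paper. To your credit, you correctly identify the two natural attack angles and the reason the beta argument does not transfer.

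Beyond incompleteness, one step in your martingale route is not merely unproven but fails as stated. You propose to deduce that the increment $T(X_k)-\E[T(X_k)]$ is $O(1/(\beta')^2)$-subgaussian from (i) $T\in[0,1]$ and (ii) a discrete Lipschitz bound $|T(x+1)-T(x)|\le C/(\beta'+1)$. This cannot work for a worst-case function with those two properties: the conditional law of $X_k$ is negative binomial with only geometric tails, $\Pr[X_k=x]\propto(\alpha')_x/x!\cdot(\beta'+1)^{-x}$, so the event that $X_k$ deviates from its typical value by $m\approx t\beta'/C$ --- which is what is needed to move $T$ by a constant $t$ --- has probability roughly $e^{-\Theta(m\log\beta')}=e^{-\Theta(t\beta'\log\beta')}$. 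A $\sigma^2=O(1/(\beta')^2)$ subgaussian bound would instead require this probability to be at most $e^{-\Theta(t^2(\beta')^2)}$, which is far smaller for fixed $t$ and large $\beta'$. Since $T$ genuinely has constant-order range for natural choices of $S$ (e.g.\ $T(x)\to 0$ as $x\to\infty$ for finite $S$ while $T$ is bounded away from $0$ near the posterior mode), the deviation regime $t=\Theta(1)$ is not vacuous, and invoking Lemma \ref{momentbound} for $T(X_k)$ does not rescue this: the ratio condition on raw moments of $T(X_k)$ would have to encode exactly the tail decay that the Lipschitz-plus-boundedness hypotheses fail to supply. Any successful argument must therefore exploit finer structure of $g$ (or of the Gamma--Poisson semigroup) than a one-step Lipschitz constant, which is consistent with the paper's assessment that new methods are needed.
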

Numerical data suggests that all of these conjectures are true, and the simplest special cases of each are relatively easy to check using Lemma \ref{momentbound}. However, both the raw moments and the step sizes of most of these transformed random variables are not simple to compute, so new methods will be required to prove any of these conjectures.

\subsection{Implications of the Learning-Theoretic Proof}\label{opbda}

Our learning-theoretic proof of Theorem \ref{betasub} is surprisingly general:
\begin{lem}Under any of the following conditions, the curator's posterior distribution with respect to a query function $f$ if $O(1/n)$-subgaussian if any of these conditions hold:
\begin{enumerate}
\item For all $n$, upon receiving the $n$th sample, the change in the posterior mean with respect to $f$ is $O(1/n^2)$-subgaussian.
\item For all $n$, upon receiving the $n$th sample, the change in the posterior mean with respect to $f$ is almost surely bounded by $O(1/n)$.
\item For all $n$, replacing one of the $n$ samples changes the posterior mean with respect to $f$ by at most $O(1/n)$ almost surely.
\item The posterior mean with respect to $f$ is a $O(1)$-Lipschitz function of the empirical mean with respect to $f$.
\end{enumerate}\label{pmchange}
\end{lem}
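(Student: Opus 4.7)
The plan is to argue via a chain of implications Condition~4 $\Rightarrow$ Condition~3 $\Rightarrow$ Condition~2 $\Rightarrow$ Condition~1 $\Rightarrow$ the conclusion, where Condition~1 supports the bulk of the work via an Azuma's-inequality argument directly analogous to the learning-theoretic proof of Theorem~\ref{betasub}, and the remaining three links are short sensitivity reductions.

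For Condition~1, the setup mirrors the proof of Theorem~\ref{betasub}: let $X_k = \E[\mu \mid S_1,\ldots,S_k]$, where $\mu = \E_{x\sim\vp}[f(x)]$ and the $S_i$ are the samples. Then $\{X_k\}$ is a martingale; by the martingale convergence theorem (justified below), $X_k \to \mu$ a.s., so
\[
\mu - X_n \;=\; \sum_{k=n+1}^{\infty} (X_k - X_{k-1}).
\]
By hypothesis $(X_k - X_{k-1}) \mid \F_{k-1}$ is $O(1/k^2)$-subgaussian, so Azuma's inequality applied conditionally on $\F_n$ shows that $(\mu - X_n) \mid \F_n$ is subgaussian with variance proxy $\sum_{k=n+1}^\infty O(1/k^2) = O(1/n)$, which is precisely the subgaussianity of the posterior distribution on $\mu$.

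The three reductions are each essentially one line. Condition~2 $\Rightarrow$ Condition~1: the martingale increment $X_k - X_{k-1}$ is mean-zero conditional on $\F_{k-1}$, and if bounded almost surely by $c/k$ it is $(c/k)^2$-subgaussian by Hoeffding's lemma. Condition~3 $\Rightarrow$ Condition~2: view $X_n$ as a function of $(S_1,\ldots,S_n)$; since $X_{n-1} = \E_{S_n}[X_n \mid S_1,\ldots,S_{n-1}]$, the increment $X_n - X_{n-1}$ has magnitude bounded by the range of $X_n$ as a function of $S_n$ alone, which the replacement-sensitivity hypothesis bounds by $O(1/n)$. Condition~4 $\Rightarrow$ Condition~3: replacing one of $n$ samples changes the empirical mean $\frac1n\sum_i f(S_i)$ by at most $1/n$ (since $f \in [0,1]$), and the $O(1)$-Lipschitz assumption then transfers this to the posterior mean.

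The main subtlety is verifying the a.s.\ convergence $X_k \to \mu$ used to express $\mu - X_n$ as the infinite sum of future increments, since the lemma statement is abstract and does not assume any particular updating mechanism. Under any of the four hypotheses the summable subgaussianity of the increments makes $\{X_k\}$ an $L^2$-bounded martingale, so convergence holds by standard theorems, and $\E[\lim X_k \mid \F_\infty] = \mu$ since the posterior on $\vp$ is consistent given the full data. Once convergence is in hand, Azuma plus the three reductions close the argument.
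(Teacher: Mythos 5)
Your proposal is correct and follows essentially the same route as the paper: the chain of reductions $4\Rightarrow3\Rightarrow2\Rightarrow1$ with Hoeffding's lemma, the conditional-expectation/range argument, and the Lipschitz transfer are all identical, and Condition~1 is handled by the same Azuma's-inequality argument on the posterior-mean martingale. Your added care about justifying the almost-sure convergence $X_k\to\mu$ (via $L^2$-boundedness and posterior consistency) is a small increment of rigor over the paper, which simply asserts that convergence, but it does not change the approach.
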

\begin{proof}We will first show that the first condition suffices and then show that each condition implies the previous one.
\begin{enumerate}
\item In general, the evolution of the posterior mean with respect to $f$ is a martingale which converges almost surely to the true value. Therefore, by Azuma's inequality, the error in the posterior mean after $n$ data points is $\sum_{k=1}^\infty O\left(\frac1{(n+k)^2}\right)=O\left(\frac1n\right)$-subgaussian, as desired.
\item By Hoeffding's inequality, a $O(1/n)$-bounded random variable is $O(1/n^2)$-subgaussian, so this condition implies the first.
\item This condition implies that upon receiving the $n$th sample, the curator's posterior mean with respect to $f$ falls within an interval of width $O(1/n)$. Since the posterior mean before receiving this sample is the average of the new posterior means (weighted by probability), it also falls into this interval and the previous condition holds.
\item Since $f(x)\in[0,1]$ for all possible samples $x$, replacing the $n$th sample changes the empirical mean by at most $1/n$. So if the posterior mean is an $O(1)$-Lipschitz function of the empirical mean, it will change by at most $O(1/n)$ upon replacing a sample, as desired.\qedhere
\end{enumerate}\end{proof}

By Proposition \ref{subgaussiansufficiency}, this implies that if any of these conditions hold for every possible query function $f:\X\to[0,1]$ (or just counting queries, $f:\X\to\{0,1\}$, by Proposition \ref{countingsufficiency}), the posterior mean curator strategy achieves the static bound of $n=O\left(\frac1{\eps^2}\log\frac q\delta\right)$.

In this light, Theorem \ref{betasub} is a simple consequence of the behavior of the posterior mean as the curator receives more samples. If this behavior is stable (in any of these senses), then it will be accurate.

These results pair nicely with the negative results of \cite{elder2016challenges}. The first key construction in that work was a problem based on error-correcting codes that caused the posterior mean to fluctuate wildly, shifting by a constant with respect to some query even after the curator had seen many data points. These results conversely show that all difficulties in Bayesian adaptive data analysis must include this type of instability to some degree.

\section{Conclusion}\label{conc}

This paper's contributions have two major highlights. First, we showed in Corollary \ref{Dirichletcor} that Bayesian adaptive data analysis with a Dirichlet prior avoids the problem of adaptive overfitting by nature: The posterior mean curator algorithm can in fact answer exponentially many adaptive queries in the number of data points under this model, matching the bound for static queries. These tight and dimension-free results, while specific to the Dirichlet prior, are much stronger than any previous adaptive data analysis algorithms could achieve.

The second major contribution, a key ingredient of the first, is a new probabilisitic result on the concentration of the beta distribution, Theorem \ref{betasub}, which has actually already had implications outside of adaptive data analysis \cite{perry2016statistical}. Given the ubiquity of the beta distribution, we expect many other researchers will be interested in this concentration result.

Both of these contributions featured new core techniques that will also likely be generally useful. In Bayesian adaptive data analysis, we established in Proposition \ref{subgaussiansufficiency} and Lemma \ref{pmchange} simple conditions to check in order to establish that the posterior mean curator is accurate with the static sample complexity. Similarly, we established in Lemma \ref{momentbound} a general technique for turning bounds on the \emph{raw} moments of a distribution into subgaussianity results that could have further applications beyond the beta distribution.

These results also both offer interesting open problems. Empirically, it seems that any appropriate conjugate prior has the same sort of concentration as the Dirichlet prior, suggesting that the posterior mean curator is accurate on an even wider swath of models, which would be encouraging to prove. On the probabilistic side, while Theorem \ref{betasub} is almost tight for the symmetric case of $\alpha=\beta$, it might be improved with similar techniques in the asymmetric case, which could be useful in some applications.

\section{Acknowledgements}

The author would like to thank Jon Kelner and the MIT machine learning theory group for many fruitful conversations.

\bibliography{subgaussiansources}
\bibliographystyle{plain}

\end{document}